\newcommand{\norm}[1]{\left\Vert#1\right\Vert}
\newcommand{\abs}[1]{\left\vert#1\right\vert}
\newcommand{\set}[1]{\left\{#1\right\}}
\newcommand{\mset}[1]{\left\{\kern-.5em\left\{ #1 \right\}\kern-.5em\right\}}
\newcommand{\mmset}[1]{\{\kern-.4em\{ #1 \}\kern-.4em\}}
\newcommand{\parr}[1]{\left (#1\right )}
\newcommand{\Real}{\mathbb R}
\newcommand{\too}{\rightarrow}
\newcommand{\one}{\mathbf{1}}
\newtheorem*{rep@theorem}{\rep@title}
\newcommand{\newreptheorem}[2]{%
\newenvironment{rep#1}[1]{%
 \def\rep@title{#2 \ref{##1}}%
 \begin{rep@theorem}}%
 {\end{rep@theorem}}}
\newtheorem{theorem}{Theorem}
\newtheorem{lemma}{Lemma}
\newtheorem{proposition}{Proposition}
\newcommand{\subalign}[1]{%
  \vcenter{%
    \Let@ \restore@math@cr \default@tag
    \baselineskip\fontdimen10 \scriptfont\tw@
    \advance\baselineskip\fontdimen12 \scriptfont\tw@
    \lineskip\thr@@\fontdimen8 \scriptfont\thr@@
    \lineskiplimit\lineskip
    \ialign{\hfil$\m@th\scriptstyle##$&$\m@th\scriptstyle{}##$\crcr
      #1\crcr
    }%
  }
}
\newcommand{\eg}{{e.g.}}
\newcommand{\ie}{{i.e.}}
\newcommand{\revision}[1]{{\color{black} #1}}
\def\eqref#1{equation~\ref{#1}}
\def\Eqref#1{Equation~\ref{#1}}
\def\plaineqref#1{\ref{#1}}
\def\1{\bm{1}}
\def\vi{{\bm{i}}}
\def\vj{{\bm{j}}}
\def\valpha{{\bm{\alpha}}}
\def\vbeta{{\bm{\beta}}}
\def\vgamma{{\bm{\gamma}}}
\def\mA{{\bm{A}}}
\def\mB{{\bm{B}}}
\def\mX{{\bm{X}}}
\DeclareMathAlphabet{\mathsfit}{\encodingdefault}{\sfdefault}{m}{sl}
\SetMathAlphabet{\mathsfit}{bold}{\encodingdefault}{\sfdefault}{bx}{n}
\newcommand{\tens}[1]{\bm{\mathsfit{#1}}}
\def\tA{{\tens{A}}}
\def\tB{{\tens{B}}}
\def\tC{{\tens{C}}}
\def\tW{{\tens{W}}}
\def\tX{{\tens{X}}}
\def\tY{{\tens{Y}}}
\def\tZ{{\tens{Z}}}
\title{Provably Powerful Graph Networks}
\author{
	Haggai Maron\thanks{Equal contribution} \ \ \   \  Heli Ben-Hamu$^*$ \ \  Hadar Serviansky$^*$ \ \ 	Yaron Lipman \\
	Weizmann Institute of Science\\
	Rehovot, Israel \\
}
\begin{document}

\maketitle

\begin{abstract}
Recently, the Weisfeiler-Lehman (WL) graph isomorphism test was used to measure the expressive power of graph neural networks (GNN). It was shown that the popular message passing GNN cannot distinguish between graphs that are indistinguishable by the $1$-WL test \citep{morris2018weisfeiler,xu2018how}. Unfortunately, many simple instances of graphs are indistinguishable by the $1$-WL test. 

In search for more expressive graph learning models we build upon the recent $k$-order invariant and equivariant graph neural networks \citep{maron2018invariant, maron2019universality}  and present two results: 

First, we show that such $k$-order networks can distinguish between non-isomorphic graphs as good as the $k$-WL tests, which are provably stronger than the $1$-WL test for $k>2$. This makes these models strictly stronger than message passing models. Unfortunately, the higher expressiveness of these models comes with a computational cost of processing high order tensors. 

Second, setting our goal at building a provably stronger, \emph{simple} and \emph{scalable}  model we show that a reduced $2$-order network containing just scaled identity operator, augmented with a single quadratic operation (matrix multiplication) has a provable $3$-WL expressive power. Differently put, we suggest a simple model that interleaves applications of standard Multilayer-Perceptron (MLP) applied to the feature dimension and matrix multiplication. 
We validate this model by presenting state of the art results on popular graph classification and regression tasks. To the best of our knowledge, this is the first practical invariant/equivariant model with guaranteed $3$-WL expressiveness, strictly stronger than message passing models.

\end{abstract}
\section{Introduction}



Graphs are an important data modality which is frequently used in many fields of science and engineering. Among other things, graphs are used to model social networks, chemical compounds, biological structures and high-level image content information. One of the major tasks in graph data analysis is learning from graph data. As classical approaches often use hand-crafted graph features that are not necessarily suitable to all datasets and/or tasks (\eg, \cite{kriege2019survey}), a significant research effort in recent years is to develop deep models that are able to learn new graph representations from raw features (\eg, \citet{Gori2005,Duvenaud2015,Niepert2016,kipf,Velickovic2017,monti2017geometric,Hamilton2017,morris2018weisfeiler,xu2018how}).

Currently, the most popular methods for deep learning on graphs are \emph{message passing neural networks} in which the node features are propagated through the graph according to its connectivity structure \citep{Gilmer2017}.  In a successful attempt to quantify the expressive power of message passing models, \citet{morris2018weisfeiler,xu2018how} suggest to compare the model's ability to \emph{distinguish} between two given graphs to that of the hierarchy of the Weisfeiler-Lehman (WL) graph isomorphism tests \citep{grohe2017descriptive,babai2016graph}. Remarkably, they show that the class of message passing models has limited expressiveness and is not better than the first WL test ($1$-WL, a.k.a.~color refinement).  For example, Figure \ref{fig:wl_example} depicts two graphs (\ie, in blue and in green) that $1$-WL cannot distinguish, hence indistinguishable by any message passing algorithm.  

\begin{wrapfigure}[12]{r}{0.25\textwidth}
\vspace{-10pt}
\hspace{7pt}
\includegraphics[width=0.20\textwidth]{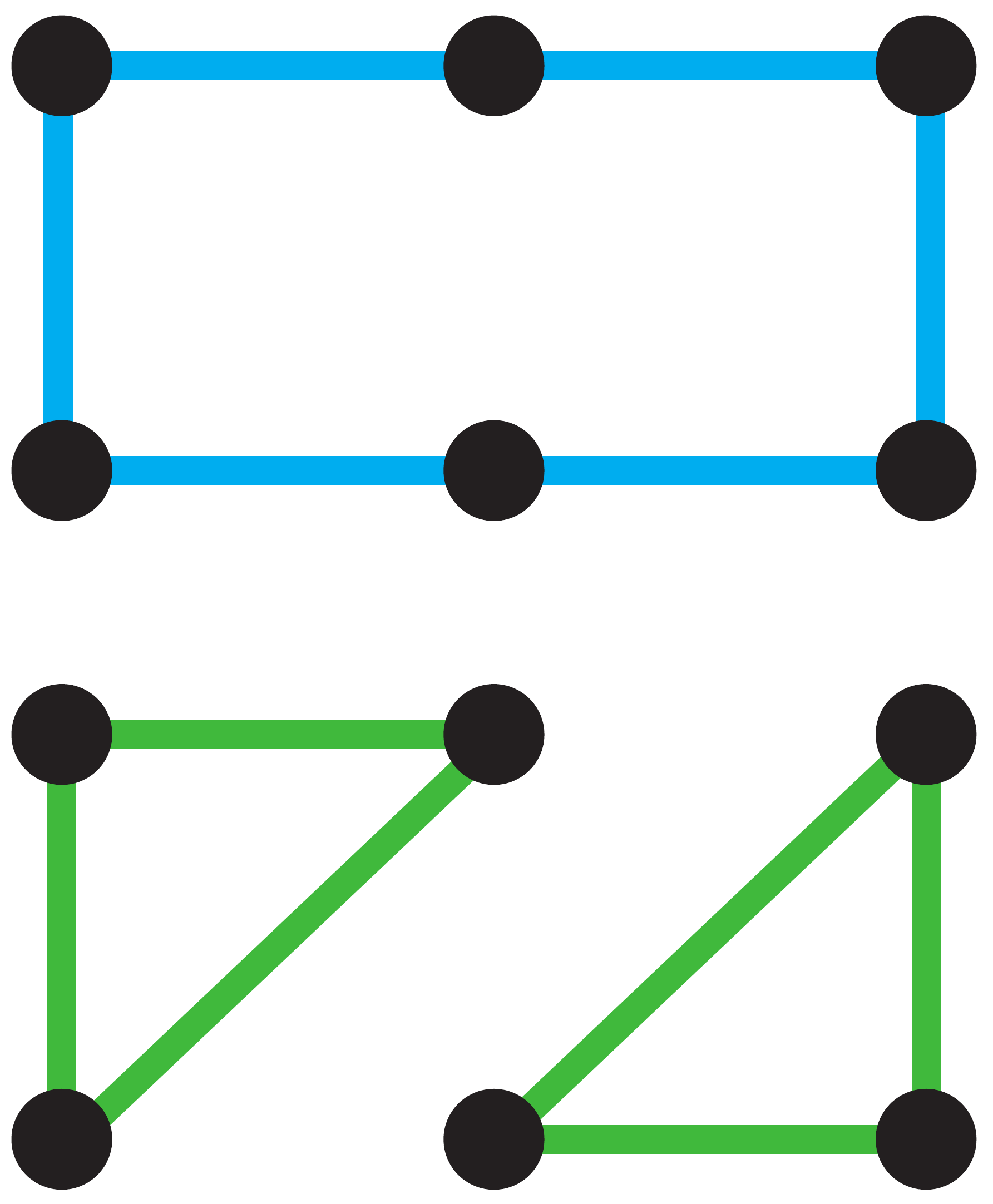}
\vspace{0pt}
\caption{Two graphs not distinguished by $1$-WL.}\label{fig:wl_example}
\end{wrapfigure}

The goal of this work is to explore and develop GNN models that possess higher expressiveness while maintaining scalability, as much as possible. We present two main contributions. First, establishing a baseline for expressive GNNs, we prove that the recent $k$-order invariant GNNs \citep{maron2018invariant,maron2019universality} offer a natural hierarchy of models that are as expressive as the $k$-WL tests, for $k\geq 2$. Second, as $k$-order GNNs are not practical for $k>2$ we develop a simple, novel GNN model, that incorporates standard MLPs of the feature dimension and a matrix multiplication layer. This model, working only with $k=2$ tensors (the same dimension as the graph input data), possesses the expressiveness of $3$-WL. Since, in the WL hierarchy, $1$-WL and $2$-WL are equivalent, while $3$-WL is strictly stronger, this model is provably more powerful than the message passing models. For example, it can distinguish the two graphs in Figure \ref{fig:wl_example}. As far as we know, this model is the first to offer both expressiveness ($3$-WL) and scalability ($k=2$). 

The main challenge in achieving high-order WL expressiveness with GNN models stems from the difficulty to represent the multisets of neighborhoods required for the WL algorithms.  We advocate a novel representation of multisets based on Power-sum Multi-symmetric Polynomials (PMP) which are a generalization of the well-known elementary symmetric polynomials. This representation provides a convenient theoretical tool to analyze models' ability to implement the WL tests.

A related work to ours that also tried to build graph learning methods that surpass the $1$-WL expressiveness offered by message passing is \citet{morris2018weisfeiler}. They develop powerful deep models generalizing message passing to higher orders that are as expressive as higher order WL tests. Although making progress, their full model is still computationally prohibitive for $3$-WL expressiveness and requires a relaxed local version compromising some of the theoretical guarantees.

Experimenting with our model on several real-world datasets that include classification and regression tasks on social networks, molecules, and chemical compounds, we found it to be on par or better than state of the art. 



\section{Previous work}

\paragraph{Deep learning on graph data.}
The pioneering works that applied neural networks to graphs are \cite{Gori2005,Scarselli2009} that learn node representations using recurrent neural networks, which were also used in \cite{Li2015}. Following the success of convolutional neural networks \citep{krizhevsky2012imagenet}, many works have tried to generalize the notion of convolution to graphs and build networks that are based on this operation. \cite{Bruna2013} defined graph convolutions as operators that are diagonal in the graph laplacian eigenbasis. This paper resulted in multiple follow up works with more efficient and spatially localized convolutions \citep{Henaff2015,Defferrard2016,kipf,Levie2017}. Other works define graph convolutions as local stationary functions that are applied to each node and its neighbours (\eg, \cite{Duvenaud2015,Atwood2015,Niepert2016,hamilton2017representation,Velickovic2017,Monti2018}). Many of these works were shown to be instances of the family of message passing neural networks \citep{Gilmer2017}: methods that apply parametric functions to a node and its neighborhood and then apply some pooling operation in order to generate a new feature for each node. 
 In a recent line of work, it was suggested to define graph neural networks using permutation equivariant operators on tensors describing $k$-order relations between the nodes. \cite{Kondor2018} identified several such linear and quadratic equivariant operators and showed that the resulting network can achieve excellent results on popular graph learning benchmarks. \cite{maron2018invariant} provided a full characterization of linear equivariant operators between tensors of arbitrary order. In both cases, the resulting networks were shown to be at least as powerful as message passing neural networks. In another line of work, \cite{murphy2019relational} suggest expressive invariant graph models defined using averaging over all permutations of an arbitrary base neural network.    

%
%

\paragraph{Weisfeiler Lehman graph isomorphism test.} The Weisfeiler Lehman tests is a hierarchy of increasingly powerful graph isomorphism tests \citep{grohe2017descriptive}. 
The WL tests have found many applications in machine learning: in addition to \cite{xu2018how,morris2018weisfeiler}, this idea was used in \cite{shervashidze2011weisfeiler} to construct a graph kernel method, which was further generalized to higher order WL tests in \cite{morris2017glocalized}. \citet{lei2017deriving} showed that their suggested GNN has a theoretical connection to the WL test. WL tests were also used in \cite{zhang2017weisfeiler} for link prediction tasks. In a concurrent work, \citet{morris2019towards} suggest constructing graph features based on an equivalent sparse version of high-order WL achieving great speedup and expressiveness guarantees for sparsely connected graphs. \vspace{-5pt}



\section{Preliminaries}\label{s:prelim}\vspace{-5pt}
We denote a set by $\set{a,b,\ldots,c}$, an ordered set (tuple) by $(a,b,\ldots,c)$ and a multiset (\ie, a set with possibly repeating elements) by $\mmset{a,b,\ldots,c}$. We denote $[n]=\set{1,2,\ldots,n}$, and $(a_i \ \vert \ i\in[n])=(a_1,a_2,\ldots,a_n)$. Let $S_n$ denote the permutation group on $n$ elements. 
We use multi-index $\vi\in[n]^k$ to denote a $k$-tuple of indices, $\vi=(i_1,i_2,\ldots,i_k)$. $g\in S_n$ acts on multi-indices $\vi\in [n]^k$ entrywise by $g(\vi)=(g(i_1), g(i_2), \ldots, g(i_k))$. 
$S_n$ acts on $k$-tensors $\tX\in\Real^{n^k\times a}$ by $(g\cdot \tX)_{\vi,j} = \tX_{g^{-1}(\vi),j}$, where $\vi\in [n]^k$, $j\in [a]$. 

\vspace{-5pt}
\subsection{$k$-order graph networks}\label{ss:k_order_networks}
\vspace{-5pt}
\citet{maron2018invariant} have suggested a family of permutation-invariant deep neural network models for graphs. Their main idea is to construct networks by concatenating maximally expressive linear equivariant layers. More formally, a $k$-order invariant graph network is a composition $F=m\circ h \circ L_d \circ \sigma \circ \cdots \circ \sigma \circ L_1$, where $L_i:\Real^{n^{k_i}\times a_i} \too \Real^{n^{k_{i+1}} \times a_{i+1}}$, $\max_{i\in [d+1]} k_i = k$, are \emph{equivariant linear layers}, namely satisfy $$L_i(g\cdot \tX) = g\cdot L_i(\tX), \qquad \forall g\in S_n, \quad \forall \tX\in \Real^{n^{k_i}\times a_i},$$
$\sigma$ is an entrywise non-linear activation, $\sigma(\tX)_{\vi,j}=\sigma(\tX_{\vi,j})$, $h:\Real^{n^{k_{d+1}} \times a_{d+1}}\too \Real^{a_{d+2 }}$ is an \emph{invariant linear layer}, namely satisfies 
$$h(g\cdot \tX) = h(\tX), \qquad \forall g\in S_n, \quad \forall \tX\in \Real^{n^{k_{d+1}}\times a_{d+1}},$$
and $m$ is a Multilayer Perceptron (MLP). The invariance of $F$ is achieved by construction (by propagating $g$ through the layers using the definitions of equivariance and invariance):  $$F(g\cdot \tX)=m(  \cdots ( L_1(g\cdot \tX)) \cdots ) = m( \cdots ( g\cdot L_1( \tX)) \cdots )=\cdots=m(h(g\cdot L_d(\cdots)))=F(\tX).$$
When $k=2$, \citet{maron2018invariant} proved that this construction gives rise to a model that can approximate any message passing neural network \citep{Gilmer2017} to an arbitrary precision; \citet{maron2019universality} proved these models are universal for a very high tensor order of $k=poly(n)$, which is of little practical value (an alternative proof was recently suggested in \cite{keriven2019universal}).

\vspace{-5pt}
\subsection{The Weisfeiler-Lehman graph isomorphism test}\label{ss:WL}
\vspace{-5pt}
Let $G=(V,E,d)$ be a colored graph where $|V|=n$ and $d:V\too \Sigma$ defines the color attached to each vertex in $V$, $\Sigma$ is a set of colors.  The Weisfeiler-Lehman (WL) test is a family of algorithms used to test graph isomorphism. Two graphs $G,G'$ are called isomorphic if there exists an edge and color preserving bijection $\phi:V\too V'$.

There are two families of WL algorithms: $k$-WL and $k$-FWL (Folklore WL), both parameterized by $k=1,2,\ldots,n$. 
$k$-WL and $k$-FWL both construct a coloring of $k$-tuples of vertices, that is $c:V^k \too \Sigma$. Testing isomorphism of two graphs $G,G'$ is then performed by comparing the histograms of colors produced by the $k$-WL (or $k$-FWL) algorithms.

We will represent coloring of $k$-tuples using a tensor $\tC \in \Sigma^{n^k}$, where $\tC_{\vi}\in \Sigma$, $\vi\in [n]^k$ denotes the color of the $k$-tuple $v_\vi=(v_{i_1},\ldots,v_{i_k}) \in V^k$.  In both algorithms, the initial coloring $\tC^0$ is defined using the \emph{isomorphism type} of each $k$-tuple. That is, two $k$-tuples $\vi,\vi'$ have the same isomorphism type (\ie, get the same color, $\tC_\vi=\tC_{\vi'}$) if for all $q,r\in [k]$: (i) $v_{i_q}=v_{i_r} \iff v_{i'_q}=v_{i'_r}$; (ii) $d(v_{i_q})=d(v_{i'_q})$; and (iii) $(v_{i_r},v_{i_q})\in E \iff (v_{i'_r},v_{i'_q})\in E$. Clearly, if $G,G'$ are two isomorphic graphs then there exists $g\in S_n$ so that  $g\cdot\tC'^{0}=\tC^{0}$. 

In the next steps, the algorithms refine the colorings $\tC^l$, $l=1,2,\ldots$ until the coloring does not change further, that is, the subsets of $k$-tuples with same colors do not get further split to different color groups. It is guaranteed that no more than $l=poly(n)$ iterations are required \citep{douglas2011weisfeiler}.   

\begin{wrapfigure}[7]{R}{0.12\textwidth}
\vspace{-12pt}
\includegraphics[scale=0.2]{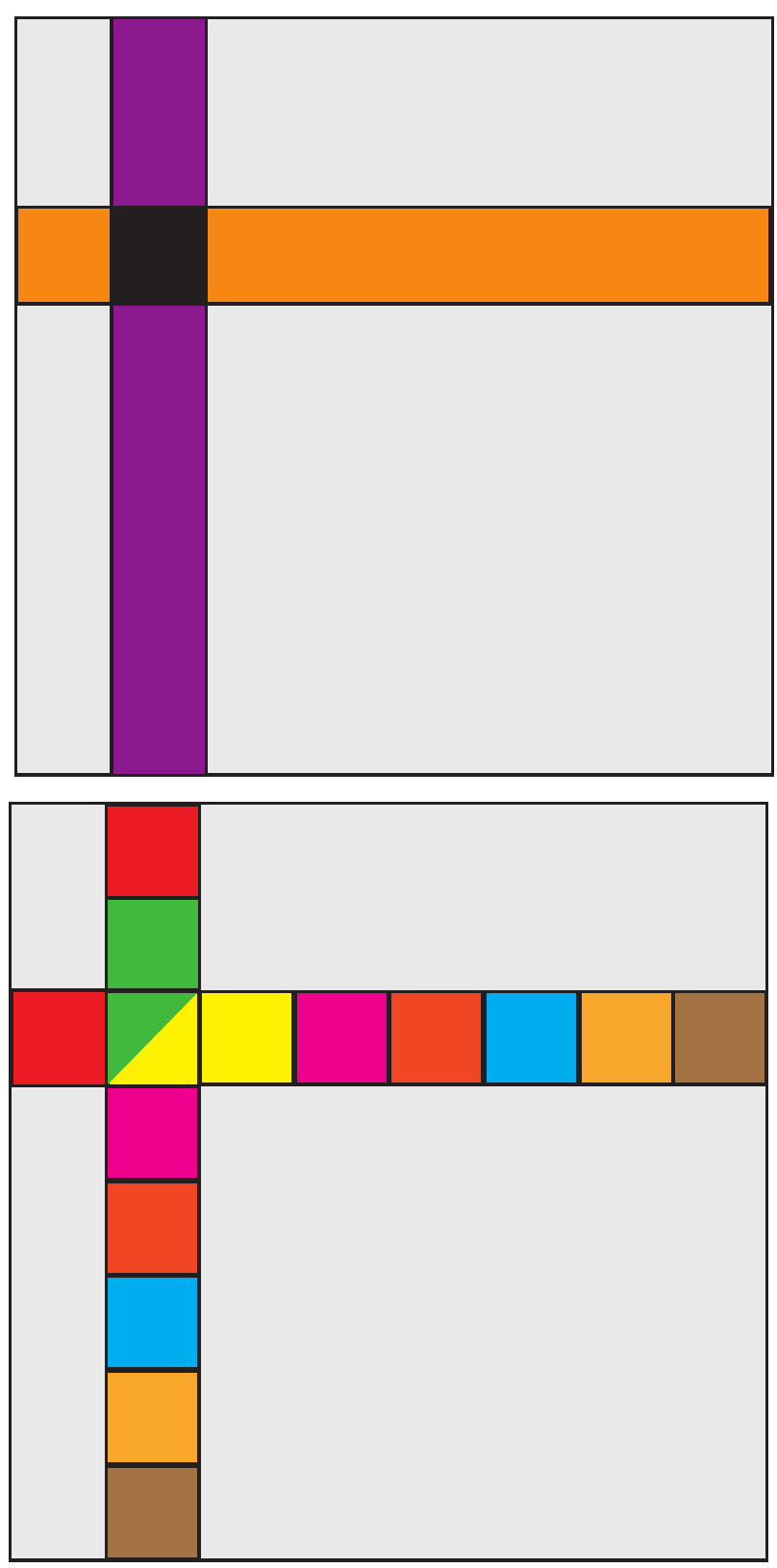}
\vspace{10pt}
\end{wrapfigure}
The construction of $\tC^l$ from $\tC^{l-1}$ differs in the WL and FWL versions. The difference is in how the colors are aggregated from neighboring $k$-tuples. We define two notions of neighborhoods of a $k$-tuple $\vi\in[n]^k$:
\begin{align}
    N_j(\vi) &= \set{ (i_1,\ldots,i_{j-1},i',i_{j+1},\ldots,i_k) \ \Big \vert \  i'\in [n] }   \\
    N^F_j(\vi) &= \Big ( (j,i_2,\ldots,i_k), (i_1,j,\ldots,i_k), \ldots, (i_1,\ldots,i_{k-1},j) \Big )  
\end{align}
$N_j(\vi)$, $j\in [k]$ is the $j$-th neighborhood of the tuple $\vi$ used by the WL algorithm, while $N_j^F(\vi)$, $j\in[n]$ is the $j$-th neighborhood used by the FWL algorithm. Note that $N_j(\vi)$ is a set of $n$ $k$-tuples, while $N^F_j(\vi)$ is an ordered set of $k$  $k$-tuples. The inset to the right illustrates these notions of neighborhoods for the case $k=2$: the top figure shows $N_1(3,2)$ in purple and $N_2(3,2)$ in orange. The bottom figure shows $N^F_j(3,2)$ for all $j=1,\dots,n$ with different colors for different $j$.

The coloring update rules are:
\begin{align}\label{e:WL_update}
  \text{WL:} \quad \tC^l_\vi & = \mathrm{enc} \Big( \tC^{l-1}_\vi , \Big( \ \mmset{\tC^{l-1}_\vj \ \vert\  \vj\in N_j(\vi)}  \  \Big\vert \ j\in [k] \ \Big)  \ \ \Big) \\ \label{e:FWL_update}
  \text{FWL:} \quad \tC^l_\vi &= \mathrm{enc}\Big( \tC^{l-1}_\vi , \mset{ \big( \tC^{l-1}_\vj \ \vert \ \vj\in N^F_j(\vi) \big) \ \Big \vert \ j\in [n] } \ \Big)
\end{align}
where $\mathrm{enc}$ is a bijective map from the collection of all possible tuples in the r.h.s.~of  Equations (\plaineqref{e:WL_update})-(\plaineqref{e:FWL_update}) to $\Sigma$. 

When $k=1$ both rules, (\plaineqref{e:WL_update})-(\plaineqref{e:FWL_update}), degenerate to $\tC^l_i=\mathrm{enc} \parr{\tC^{l-1}_i, \mmset{\tC^{l-1}_j \ \vert \ j\in [n]}  }$, which will not refine any initial color. Traditionally, the first algorithm in the WL hierarchy is called WL, $1$-WL, or the \emph{color refinement algorithm}. In color refinement, one starts with the coloring prescribed with $d$. Then, in each iteration, the color at each vertex is refined by a new color representing its current color and the multiset of its neighbors' colors. 

Several known results of WL and FWL algorithms \citep{cai1992optimal,grohe2017descriptive,morris2018weisfeiler,grohe2015pebble} are:
\begin{enumerate}
    \item $1$-WL and $2$-WL have equivalent discrimination power. 
    \item $k$-FWL is equivalent to $(k+1)$-WL for $k\geq 2$.
    \item For each $k\geq 2$ there is a pair of non-isomorphic graphs distinguishable by $(k+1)$-WL but not by $k$-WL. \vspace{-3pt}
\end{enumerate}

\section{Colors and multisets in networks}\label{s:colors_and_multisets}
\vspace{-5pt}
Before we get to the two main contributions of this paper we address three challenges that arise when analyzing networks' ability to implement WL-like algorithms: (i) Representing the colors $\Sigma$ in the network; (ii) implementing a multiset representation; and  (iii) implementing the encoding function. 

\vspace{-3pt}
\paragraph{Color representation.}\vspace{-3pt}
We will represent colors as vectors. That is, we will use tensors $\tC\in \Real^{n^k\times a}$ to encode a color per $k$-tuple; that is, the color of the tuple $\vi\in[n]^k$ is a vector $\tC_\vi \in \Real^a$. This effectively replaces the color tensors $\Sigma^{n^k}$ in the WL algorithm with $\Real^{n^k\times a}$.

\vspace{-3pt}
\paragraph{Multiset representation.}\vspace{-3pt}
A key technical part of our method is the way we encode multisets in networks. Since colors are represented as vectors in $\Real^a$, an $n$-tuple of colors is represented by a matrix $\mX=[x_1,x_2,\ldots,x_n]^T\in\Real^{n \times a}$, where $x_j\in\Real^a$, $j\in[n]$ are the rows of $\mX$. Thinking about $\mX$ as a multiset forces us to be indifferent to the order of rows. That is, the color representing $g\cdot \mX$ should be the same as the color representing $\mX$, for all $g\in S_n$. One possible approach is to perform some sort (\eg, lexicographic) to the rows of $\mX$. Unfortunately, this seems challenging to implement with equivariant layers. 

Instead, we suggest to encode a multiset $\mX$ using a set of $S_n$-invariant functions called the \emph{Power-sum Multi-symmetric Polynomials} (PMP) \citep{briand2004algebra,rydh2007minimal}. The PMP are the multivariate analog to the more widely known \emph{Power-sum Symmetric Polynomials}, $p_j(y)=\sum_{i=1}^n y_i^j$, $j\in [n]$, where $y\in \Real^n$. They are defined next. 
Let $\valpha=(\alpha_1,\ldots,\alpha_a)\in[n]^a$ be a multi-index and for $y\in\Real^a$ we set $y^\valpha = y_1^{\alpha_1} \cdot y_2^{\alpha_2} \cdots y_a^{\alpha_a}$. Furthermore, $\abs{\valpha} = \sum_{j=1}^a \alpha_j$. The PMP of degree $\valpha\in [n]^a$ is 
$$p_\valpha(\mX) = \sum_{i=1}^{n} x_i^\valpha, \quad \mX\in\Real^{n\times a}.$$ 

A key property of the PMP is that the finite subset $p_\valpha$, for $\abs{\valpha}\leq n$ generates the ring of \emph{Multi-symmetric Polynomials} (MP), the set of polynomials $q$ so that $q(g\cdot \mX)=q(\mX)$ for all $g\in S_n$, $\mX\in\Real^{n\times a}$ (see, \eg,  \citep{rydh2007minimal} corollary 8.4). The PMP generates the ring of MP in the sense that for an arbitrary MP $q$, there exists a polynomial $r$ so that $q(\mX)=r\parr{u(\mX)}$, where
\begin{equation}\label{e:u}
    u(\mX):=\parr{p_{\valpha}(\mX) \ \big\vert \ \abs{\valpha}\leq n}.
\end{equation}
As the following proposition shows, a useful consequence of this property is that the vector $u(\mX)$ is a unique representation of the multi-set $\mX\in\Real^{n\times a}$. 
\begin{proposition}\label{prop:u}
For arbitrary $\mX,\mX' \in \Real^{n\times a}$: $\exists g\in S_n$ so that $\mX'=g\cdot \mX$ if and only if $u(\mX)=u(\mX')$. 
\end{proposition}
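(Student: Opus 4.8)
The plan is to prove the two implications separately: the forward direction is immediate from invariance, while the converse rests on the fact that invariant polynomials of a \emph{finite} group separate orbits, combined with the generation property of the PMP stated just above the proposition.

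For the forward direction ($\Rightarrow$), suppose $\mX' = g\cdot \mX$ for some $g\in S_n$. Recalling that the action sends the rows $x_1,\ldots,x_n$ of $\mX$ to the permuted rows $x_{g^{-1}(1)},\ldots,x_{g^{-1}(n)}$, and that each $p_\valpha(\mX)=\sum_{i=1}^n x_i^\valpha$ is a sum over \emph{all} rows, reindexing the sum by $g$ gives $p_\valpha(g\cdot \mX)=p_\valpha(\mX)$ for every multi-index $\valpha$. Hence $u(\mX')=u(\mX)$, with no constraint on the degree needed here.

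For the converse ($\Leftarrow$) I would argue by contrapositive: assuming no $g\in S_n$ satisfies $\mX'=g\cdot \mX$, I will exhibit a multi-symmetric polynomial $q$ with $q(\mX)\neq q(\mX')$ and then transfer this separation to $u$. The hypothesis says the two $S_n$-orbits $\set{g\cdot \mX \mid g\in S_n}$ and $\set{g\cdot \mX' \mid g\in S_n}$ are \emph{disjoint finite} subsets of $\Real^{n\times a}$ (each of size at most $n!$). By multivariate polynomial interpolation there is a polynomial $f$ on $\Real^{n\times a}$ taking the value $1$ at every point of the first orbit and $0$ at every point of the second. Averaging $f$ over the group (the Reynolds operator)
$$ q(\mY) := \frac{1}{n!}\sum_{g\in S_n} f(g\cdot \mY) $$
produces a polynomial that is $S_n$-invariant, i.e.\ multi-symmetric, and that is constant on each orbit by construction, so $q(\mX)=1$ while $q(\mX')=0$.

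It then remains to convert separation by $q$ into separation by $u$: since $q$ is multi-symmetric, the generation property gives a polynomial $r$ with $q=r\circ u$, and if we had $u(\mX)=u(\mX')$ we would get $q(\mX)=r(u(\mX))=r(u(\mX'))=q(\mX')$, a contradiction; hence $u(\mX)\neq u(\mX')$, which is the contrapositive of the claim. The substantive step is the orbit separation in the converse, and its only delicate point is the existence of the interpolating polynomial $f$ — guaranteed because finitely many distinct points in $\Real^{n\times a}$ impose independent conditions on polynomials of high enough degree — together with the fact that finiteness of $S_n$ is exactly what makes the averaging available. I expect no further obstacle: the genuinely hard input, that the degree-$\leq n$ PMP already generate the \emph{entire} ring of multi-symmetric polynomials, is supplied by the cited result and is used only as a black box.
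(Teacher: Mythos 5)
Your proof is correct and follows the same overall architecture as the paper's: the forward direction by reindexing the sum, and the converse by producing a polynomial that separates the two disjoint orbits, symmetrizing it with the group average $q(\mY)=\frac{1}{n!}\sum_{g\in S_n} f(g\cdot \mY)$, and invoking the generation property of the degree-$\leq n$ PMP to write $q=r\circ u$ and derive the contradiction. The one place you diverge is how the separating polynomial $f$ is obtained: the paper encloses the two orbits in a compact set $K$ and applies the Stone--Weierstrass theorem to get $f\vert_{[\mX]}\geq 1$ and $f\vert_{[\mX']}\leq 0$, so its final contradiction is the inequality $1\leq q(\mX)=q(\mX')\leq 0$; you instead interpolate exact values $1$ and $0$ at the finitely many (at most $2\cdot n!$) distinct orbit points. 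Your variant is, if anything, more elementary and self-contained: prescribing values at finitely many distinct points needs no topology, whereas the paper's appeal to Stone--Weierstrass strictly yields uniform approximation of a continuous separating function (which must itself first be constructed) rather than a polynomial with the stated inequalities, so its step quietly absorbs a little extra work. Both arguments then finish identically through the factorization $q=r\circ u$.
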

We note that Proposition \ref{prop:u} is a generalization of lemma 6 in \citet{zaheer2017deep} to the case of multisets of vectors. \revision{This generalization was possible since the PMP  provide a continuous way to encode \textit{vector} multisets (as opposed to scalar multisets in previous works). } The full proof is provided in the supplementary material.

\paragraph{Encoding function.} One of the benefits in the vector representation of colors is that the encoding function can be implemented as a simple concatenation: Given two color tensors $\tC\in\Real^{n^k\times a}$, $\tC'\in \Real^{n^k\times b}$, the tensor that represents for each $k$-tuple $\vi$ the color pair $(\tC_\vi, \tC'_\vi)$ is simply $(\tC,\tC')\in\Real^{n^k\times(a+b)}$.

\section{$k$-order graph networks are as powerful as $k$-WL}\label{s:as_powerful}
Our goal in this section is to show that, for every $2\leq k \leq n$, $k$-order graph networks \citep{maron2018invariant} are at least as powerful as the $k$-WL graph isomorphism test in terms of distinguishing non-isomorphic graphs. This result is shown by constructing a $k$-order network model and learnable weight assignment that implements the $k$-WL test. 

To motivate this construction we note that the WL update step, \Eqref{e:WL_update}, is equivariant (see proof in the supplementary material). Namely, plugging in $g\cdot \tC^{l-1}$ the WL update step would yield $g\cdot \tC^l$. Therefore, it is plausible to try to implement the WL update step using linear equivariant layers and non-linear pointwise activations.

\begin{theorem}\label{thm:k_wl_k_order}
Given two graphs $G=(V,E,d)$, $G'=(V',E',d')$ that can be distinguished by the $k$-WL graph isomorphism test, there exists a $k$-order network $F$ so that $F(G)\ne F(G')$. On the other direction for every two isomorphic graphs $G\cong G'$ and $k$-order network $F$, $F(G)=F(G')$.
\end{theorem}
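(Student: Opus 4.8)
The plan is to prove the two directions separately. The second, ``easy'' direction is immediate: a $k$-order network $F$ is $S_n$-invariant by construction (as established in Section \ref{ss:k_order_networks}), and two isomorphic graphs $G\cong G'$ have tensor representations related by $\tX'=g\cdot\tX$ for some $g\in S_n$; hence $F(G')=F(g\cdot\tX)=F(\tX)=F(G)$. All the content is therefore in the forward direction, which I would prove constructively: assuming $k$-WL distinguishes $G$ from $G'$, I build a specific $k$-order network $F$, together with a weight assignment, whose output reproduces the $k$-WL coloring and hence separates the two graphs.

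The construction proceeds in three stages: (i) produce the initial coloring $\tC^0$ encoding the isomorphism type of each $k$-tuple; (ii) simulate a single WL refinement step, Equation (\ref{e:WL_update}), with a fixed block of layers; and (iii) read off the color histogram after stabilization. For stage (i), I would argue that the isomorphism type of each $k$-tuple---the equality pattern of its indices, the vertex colors $d(v_{i_q})$, and the induced edge pattern---is computable from the input tensor by a composition of equivariant linear layers and a pointwise nonlinearity, using the fact (from the characterization of equivariant linear maps in \citet{maron2018invariant}) that the equivariant basis includes the operators isolating the diagonals/equality patterns and broadcasting the vertex and edge data to the appropriate entries; a pointwise MLP then maps these to a vector color that is equal on two tuples exactly when their isomorphism types agree. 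Since this initialization is equivariant, the resulting $\tC^0$ agrees with the WL initialization up to the global $S_n$ action.

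The heart of the argument is stage (ii), where I would show that one WL update can be realized exactly on the vector colors. The obstacle is implementing the multiset aggregation $\mmset{\tC^{l-1}_\vj \mid \vj\in N_j(\vi)}$ equivariantly, and this is precisely where the PMP representation enters. Writing the current color as $\tC^{l-1}\in\Real^{n^k\times a}$, I would first apply a pointwise MLP computing all monomials $(\tC^{l-1}_\vi)^\valpha$ for $\abs{\valpha}\le n$, each stored in its own feature channel. Summation along the $j$-th index, $\tY_\vi=\sum_{i'\in[n]}(\tC^{l-1}_{(i_1,\dots,i',\dots,i_k)})^\valpha$, is a linear equivariant operation, so after this layer the channel indexed by $(\valpha,j)$ at entry $\vi$ holds exactly the power-sum $p_\valpha$ of the neighborhood multiset along $N_j(\vi)$. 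By the generation property of the PMP together with Proposition \ref{prop:u}, the collection $\parr{p_\valpha \mid \abs{\valpha}\le n}$ is an injective encoding of that multiset; concatenating these channels with $\tC^{l-1}_\vi$ (the encoding function being concatenation, as in Section \ref{s:colors_and_multisets}) then yields, at each $\vi$, a vector that determines the entire right-hand side of Equation (\ref{e:WL_update}). A final pointwise MLP plays the role of the bijection $\mathrm{enc}$, producing $\tC^l$ with $\tC^l_\vi=\tC^l_{\vi'}$ iff the WL colors agree.

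Stacking $L=\mathrm{poly}(n)$ such blocks drives the network coloring to refine in lockstep with $k$-WL until stabilization; the invariant layer $h$ can then sum the (embedded) colors over all $k$-tuples to form a permutation-invariant summary of the color histogram, and the MLP $m$ maps distinct histograms to distinct outputs. I expect the main obstacle to be the gap between the \emph{exact} WL bijections and the \emph{approximate} nature of MLPs: the monomial maps and the encoding $\mathrm{enc}$ are realized only up to arbitrary precision. I would close this gap by finiteness---for fixed $n$ only finitely many $k$-tuples, and hence finitely many distinct colors, ever arise across $G$ and $G'$, so at each of the finitely many layers the relevant maps need only separate a finite point set, which a sufficiently accurate MLP achieves; propagating a small enough error through the fixed-depth network preserves the separation of the two final histograms, yielding $F(G)\ne F(G')$.
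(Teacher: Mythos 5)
Your proposal is correct and follows essentially the same route as the paper's proof: an equivariant initialization encoding isomorphism types, a WL update step realized by pointwise monomial (PMP) maps followed by equivariant summation along the $j$-th index with concatenation as the encoding, a final histogram readout, and a finiteness argument (finitely many colors for fixed $n$) to justify replacing the exact polynomial maps by approximating MLPs while preserving injectivity. The only cosmetic differences are that the paper constructs the initialization operator $L$ explicitly rather than citing the equivariant basis, and uses plain concatenation (with one-hot embedding before the final sum) where you insert an extra pointwise MLP for $\mathrm{enc}$; neither affects the substance.
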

The full proof is provided in the supplementary material. Here we outline the basic idea for the proof. First, an input graph $G=(V,E,d)$ is represented using a tensor of the form  $\tB \in \Real^{n^2\times (e+1)}$, as follows. 
The last channel of $\tB$, namely $\tB_{:,:,e+1}$ ('$:$' stands for all possible values $[n]$) encodes the adjacency matrix of $G$ according to $E$. The first $e$ channels $\tB_{:,:,1:e}$ are zero outside the diagonal, and $\tB_{i,i,1:e}=d(v_i)\in\Real^e$ is the color of vertex $v_i\in V$. 

Now, the second statement in Theorem \ref{thm:k_wl_k_order} is clear since two isomorphic graphs $G,G'$ will have tensor representations satisfying $\tB'=g\cdot \tB$ and therefore, as explained in Section \ref{ss:k_order_networks}, $F(\tB)=F(\tB')$. 

More challenging is showing the other direction, namely that for non-isomorphic graphs  $G,G'$ that can be distinguished by the $k$-WL test, there exists a $k$-network distinguishing $G$ and $G'$. The key idea is to show that a $k$-order network can encode the multisets $\mmset{\tB_\vj \ \vert \ \vj\in N_j(\vi)}$ for a given tensor $\tB\in\Real^{n^k\times a}$. These multisets are the only non-trivial component in the WL update rule, \Eqref{e:WL_update}. 
Note that the rows of the matrix $\mX = \tB_{i_1,\ldots,i_{j-1},:,i_{j+1},\ldots,i_k,:} \in \Real^{n\times a}$ are the colors (\ie, vectors) that define the multiset $\mmset{\tB_\vj \ \vert \ \vj\in N_j(\vi)}$. Following our multiset representation (Section \ref{s:colors_and_multisets}) we would like the network to compute $u(\mX)$ and plug the result at the $\vi$-th entry of an output tensor $\tC$. 

This can be done in two steps: First, applying the polynomial function $\tau:\Real^{a}\too \Real^b$, $b=\left(\begin{smallmatrix}n+a\\ a\end{smallmatrix}\right)$ entrywise to $\tB$, where $\tau$ is defined by $\tau(x)=(x^\valpha \ \vert \ \abs{\valpha}\leq n )$ (note that $b$ is the number of multi-indices $\valpha$ such that $\abs{\valpha}\leq n$). Denote the output of this step $\tY$. Second, apply a linear equivariant operator summing over the $j$-the coordinate of $\tY$ to get $\tC$, that is
$$\tC_{\vi,:} := L_j(\tY)_{\vi,:} = \sum_{i'=1}^n \tY_{i_1,\cdots,i_{j-1},i',i_{j+1},\ldots,i_k,:}=\sum_{\vj\in N_j(\vi)} \tau(\tB_{\vj,:})=u(\mX), \quad \vi\in [n]^k,$$  
where $\mX = \tB_{i_1,\ldots,i_{j-1},:,i_{j+1},\ldots,i_k,:}$ as desired. Lastly, we use the universal approximation theorem \citep{cybenko1989approximation,hornik1991approximation} to replace the polynomial function $\tau$ with an approximating MLP $m:\Real^{a}\too \Real^b$ to get a $k$-order network (details are in the supplementary material). Applying $m$ feature-wise, that is $m(\tB)_{\vi,:}= m(\tB_{\vi,:})$, is in particular a $k$-order network in the sense of Section \ref{ss:k_order_networks}.  

\section{A simple network with 3-WL discrimination  power}\label{s:3WL_network}

\begin{wrapfigure}[12]{R}{0.35\textwidth}
\vspace*{-8pt}\hspace{-5pt}
\includegraphics[scale=0.3]{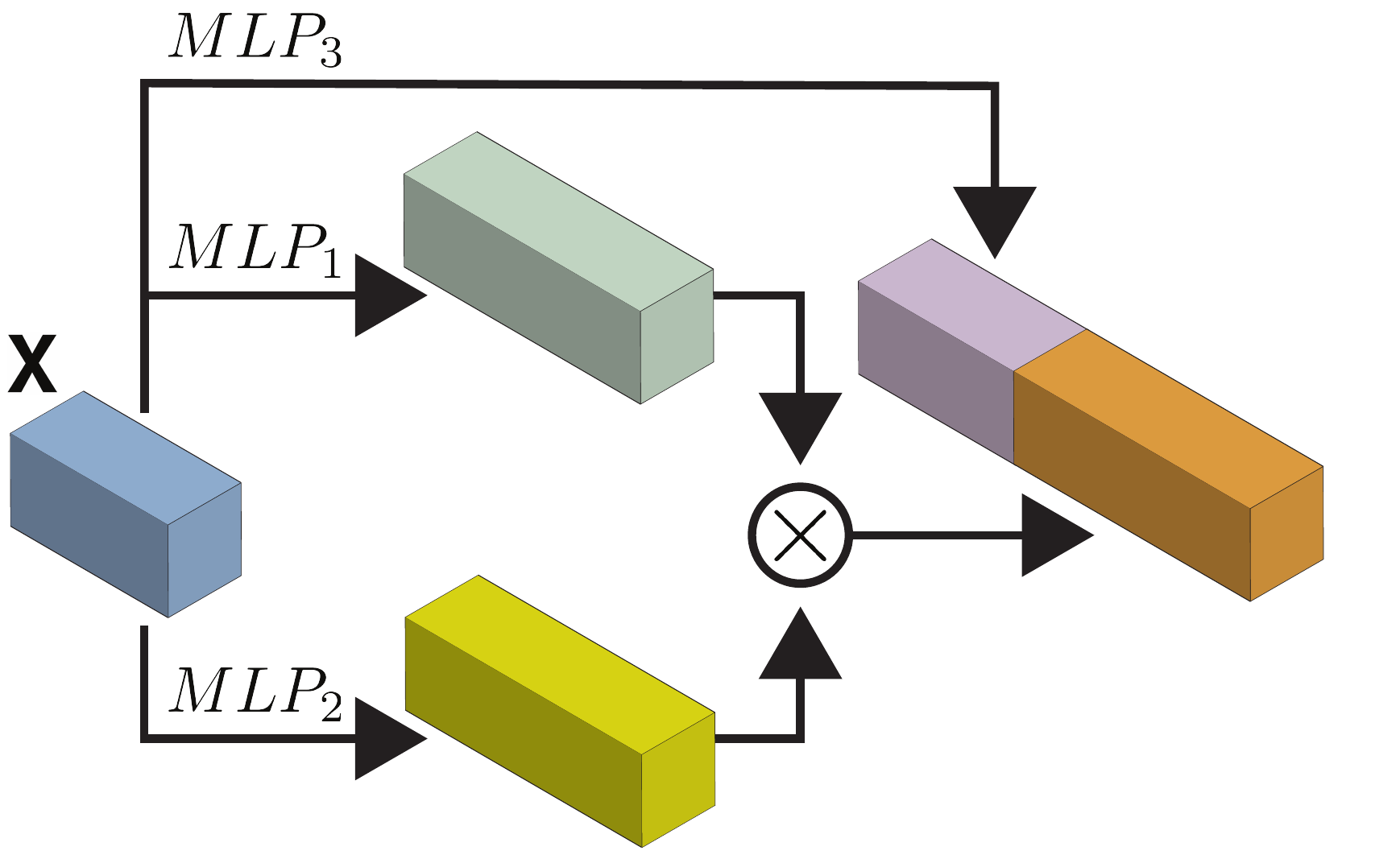}
\caption{Block structure.}\label{fig:block}
\end{wrapfigure}
In this section we describe a simple GNN model that has $3$-WL discrimination power. The model has the form 
\begin{equation}\label{e:qmlp}
 F=m\circ h \circ B_d \circ B_{d-1} \cdots \circ B_1,   
\end{equation}
where as in $k$-order networks (see Section \ref{ss:k_order_networks}) $h$ is an invariant layer and $m$ is an MLP. $B_1,\ldots,B_d$ are blocks with the following structure (see figure \ref{fig:block} for an illustration). Let $\tX\in\Real^{n\times n\times a}$ denote the input tensor to the block. First, we apply three MLPs $m_1,m_2:\Real^{a}\too \Real^{b}$, $m_3:\Real^{a}\too \Real^{b'}$ to the input tensor, $m_l(\tX)$, $l\in[3]$. This means applying the MLP to each feature of the input tensor independently, \ie, $m_l(\tX)_{i_1,i_2,:} := m_l(\tX_{i_1,i_2,:})$, $l\in [3]$. Second, matrix multiplication is performed between matching features, \ie, $\tW_{:,:,j} := m_1(\tX)_{:,:,j} \cdot m_2(\tX)_{:,:,j}$, $j\in [b]$. The output of the block is the tensor  $(m_3(\tX),\tW)$.  

We start with showing our basic requirement from GNN, namely invariance:
\begin{lemma}\label{lem:qmlp_invariance}
The model $F$ described above is invariant, \ie, $F(g\cdot \tB)=F(\tB)$, for all $g\in S_n$, and $\tB$.
\end{lemma}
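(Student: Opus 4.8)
The plan is to mirror the invariance argument already used for $k$-order networks in Section \ref{ss:k_order_networks}: invariance of the full model $F=m\circ h\circ B_d\circ\cdots\circ B_1$ reduces to showing that the stack of blocks $B_d\circ\cdots\circ B_1$ is \emph{equivariant}. Indeed, if each block satisfies $B_i(g\cdot \tX)=g\cdot B_i(\tX)$, then by composition $B_d\circ\cdots\circ B_1$ is equivariant, so feeding $g\cdot \tB$ into the network and propagating $g$ through the blocks yields $h\big(g\cdot(B_d\circ\cdots\circ B_1)(\tB)\big)=h\big((B_d\circ\cdots\circ B_1)(\tB)\big)$ by invariance of $h$; post-composing with the ordinary MLP $m$ does not reintroduce any dependence on $g$, giving $F(g\cdot\tB)=F(\tB)$. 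Thus the whole task is to verify that a single block is equivariant.

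A block is built from two kinds of operations followed by a concatenation, so I would check equivariance of each piece separately. The feature-wise MLPs $m_1,m_2,m_3$ are immediately equivariant: since the $S_n$ action only permutes the index dimensions and leaves the feature coordinate untouched, $m_l(g\cdot\tX)_{i_1,i_2,:}=m_l\big(\tX_{g^{-1}(i_1),g^{-1}(i_2),:}\big)=\big(g\cdot m_l(\tX)\big)_{i_1,i_2,:}$. Concatenation along the feature axis clearly preserves equivariance, so the only substantive point is the channel-wise matrix multiplication $\tW_{:,:,j}:=m_1(\tX)_{:,:,j}\cdot m_2(\tX)_{:,:,j}$.

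The one computation that needs care — and the only genuinely new ingredient compared with the $k$-order construction — is that matrix multiplication commutes with the permutation action. For $k=2$ the action of $g\in S_n$ on a matrix $\mA\in\Real^{n\times n}$ is $(g\cdot\mA)_{i_1,i_2}=\mA_{g^{-1}(i_1),g^{-1}(i_2)}$, which can be written compactly as $g\cdot\mA=P_g\,\mA\,P_g^{T}$, where $P_g$ is the permutation matrix of $g$. Using that $P_g$ is orthogonal, $P_g^{T}P_g=I$, I would then compute, for any two matrices $\mA,\mB$,
\begin{equation*}
(g\cdot\mA)(g\cdot\mB)=P_g\mA P_g^{T}P_g\mB P_g^{T}=P_g(\mA\mB)P_g^{T}=g\cdot(\mA\mB).
\end{equation*}
Applying this identity in each feature channel $j$ (with $\mA=m_1(\tX)_{:,:,j}$ and $\mB=m_2(\tX)_{:,:,j}$, both already equivariant by the previous paragraph) shows $\tW$ is equivariant. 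Hence the block output $(m_3(\tX),\tW)$ is equivariant, completing the reduction and therefore establishing invariance of $F$.

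The argument is essentially routine; the main obstacle, such as it is, lies entirely in the matrix-multiplication step, since this operation is not part of the standard linear-equivariant-layer toolbox of Section \ref{ss:k_order_networks}. The cleanest way to dispatch it is the permutation-matrix conjugation identity above, where the cancellation $P_g^{T}P_g=I$ is exactly what makes the product equivariant rather than merely permutation-covariant in a weaker sense. Everything else (feature-wise maps, concatenation, invariance of $h$) follows the same template as the $k$-order case.
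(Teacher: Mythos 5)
Your proof is correct and takes essentially the same route as the paper's: the paper likewise reduces invariance of $F$ to equivariance of the blocks, which rests on the identity $(g\cdot \mA)\cdot(g\cdot \mB) = g\cdot(\mA\cdot\mB)$ for channel-wise matrix multiplication. The only difference is that the paper asserts this identity without derivation, whereas you verify it explicitly via the conjugation $g\cdot\mA = P_g\mA P_g^{T}$ and the cancellation $P_g^{T}P_g=I$ — a worthwhile detail, but not a different argument.
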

\begin{proof}
Note that matrix multiplication is equivariant: for two matrices $\mA,\mB\in\Real^{n\times n}$ and $g\in S_n$ one has $(g\cdot \mA)\cdot (g\cdot \mB) = g\cdot (\mA\cdot \mB)$. This makes the basic building block $B_i$ equivariant, and consequently the model $F$ invariant, \ie,  $F(g\cdot \tB)=F(\tB)$. 
\end{proof}

Before we prove the $3$-WL power for this model, let us provide some intuition as to why matrix multiplication improves expressiveness. Let us show matrix multiplication allows this model to distinguish between the two graphs in Figure \ref{fig:wl_example}, which are $1$-WL indistinguishable. The input tensor $\tB$ representing a graph $G$ holds the adjacency matrix at the last channel $\mA:=\tB_{:,:,e+1}$. We can build a network with $2$ blocks computing $\mA^3$ and then take the trace of this matrix (using the invariant layer $h$). Remember that the $d$-th power of the adjacency matrix computes the number of $d$-paths between vertices; in particular $\mathrm{tr}(\mA^3)$ computes the number of cycles of length $3$. Counting shows the upper graph in Figure \ref{fig:wl_example} has $0$ such cycles while the bottom graph has $12$.  The main result of this section is:

\begin{theorem}\label{thm:3_WL}
Given two graphs $G=(V,E,d)$, $G'=(V',E',d')$ that can be distinguished by the $3$-WL graph isomorphism test, there exists a network $F$ (\eqref{e:qmlp}) so that $F(G)\ne F(G')$. On the other direction for every two isomorphic graphs $G\cong G'$ and $F$ (\Eqref{e:qmlp}), $F(G)=F(G')$.
\end{theorem}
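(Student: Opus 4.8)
The plan is to establish the theorem by showing that the model in \eqref{e:qmlp} can simulate the $3$-WL test, which by the results of Section \ref{ss:WL} is equivalent to $2$-FWL. The key technical reduction is that the matrix-multiplication block is exactly the operation needed to implement the FWL aggregation rule \eqref{e:FWL_update} for $k=2$, since that rule aggregates, for each pair $\vi=(i_1,i_2)$, over the $n$ neighbor-pairs $N^F_j(\vi)=\big((j,i_2),(i_1,j)\big)$ indexed by $j\in[n]$. The easy direction (isomorphic graphs) follows immediately from Lemma \ref{lem:qmlp_invariance}: if $G\cong G'$ then $\tB'=g\cdot\tB$ for some $g\in S_n$, and invariance gives $F(\tB)=F(\tB')$. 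So the substance is the forward direction.

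For the hard direction I would argue that one block, followed by feature-wise MLPs, can refine a color tensor $\tC^{l-1}\in\Real^{n^2\times a}$ into a tensor $\tC^l$ that is a valid refinement of the $2$-FWL coloring. The FWL rule requires forming, for each pair $\vi=(i_1,i_2)$, the multiset $\mset{\big(\tC^{l-1}_{(j,i_2)},\tC^{l-1}_{(i_1,j)}\big)\mid j\in[n]}$. The central observation is that this is a \emph{multiset of pairs of vectors} indexed by the summation variable $j$, and a matrix product $m_1(\tX)_{:,:,:}\cdot m_2(\tX)_{:,:,:}$ sums over exactly this index $j$: the $(i_1,i_2)$ entry of the product collects contributions from $(i_1,j)$ and $(j,i_2)$ for all $j$. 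To turn this sum into a faithful multiset encoding I would invoke the PMP machinery of Section \ref{s:colors_and_multisets}: apply MLPs $m_1,m_2$ that approximate the polynomial maps whose product, when summed over $j$, reproduces the power-sum multi-symmetric polynomials $u$ of the multiset of pairs. Concretely, for the multiset of vector-pairs one wants to compute $\sum_j (\text{monomial in } \tC^{l-1}_{(i_1,j)}) \cdot (\text{monomial in } \tC^{l-1}_{(j,i_2)})$, and Proposition \ref{prop:u} guarantees that collecting enough such power sums yields a complete invariant of the multiset. The third branch $m_3$ carries forward $\tC^{l-1}_\vi$ itself (the self-color term in the FWL rule), and concatenating $(m_3(\tX),\tW)$ realizes the injective encoding $\mathrm{enc}$ by the concatenation argument of Section \ref{s:colors_and_multisets}.

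I would then run this inductively: initialize $\tB$ as in the $k=2$ input encoding (adjacency in the last channel, vertex colors on the diagonal) so that $\tC^0$ refines the $2$-FWL isomorphism type, and stack $d$ blocks so that after $l$ blocks the network's color tensor refines $\tC^l$ of $2$-FWL. Since $2$-FWL stabilizes in $\mathrm{poly}(n)$ steps, a fixed finite depth suffices. At stabilization, if $3$-WL distinguishes $G$ from $G'$ then the color histograms differ, and the final invariant layer $h$ followed by the MLP $m$ can read off a difference (e.g. $h$ produces the multiset of final colors via another application of $u$), giving $F(G)\ne F(G')$. Throughout, the polynomial maps are replaced by approximating MLPs using the universal approximation theorem exactly as in the proof of Theorem \ref{thm:k_wl_k_order}, with an $\eps$-argument ensuring the finitely many distinct colors remain separated.

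The main obstacle I anticipate is the multiset-of-\emph{pairs} encoding: matrix multiplication naturally produces a bilinear sum $\sum_j f(\tC_{(i_1,j)})\,g(\tC_{(j,i_2)})$, and one must verify that a finite collection of such bilinear power sums—one product branch can only supply a rank-one family of monomials per feature channel—genuinely separates distinct multisets of pairs, i.e. that the PMP in the $2a$ variables of the concatenated pair can be recovered from products of PMP-type monomials in the two halves summed over the shared index $j$. This requires checking that the bilinear structure of matrix multiplication does not lose the coupling between the two coordinates of each pair, and that enough feature channels $b$ are allocated so that all needed cross-monomials $\tC_{(i_1,j)}^{\valpha}\,\tC_{(j,i_2)}^{\vbeta}$ appear; this counting and separation argument is where the real work lies.
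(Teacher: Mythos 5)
Your proposal follows the paper's proof almost step for step: reduce $3$-WL to $2$-FWL, observe that matrix multiplication performs exactly the aggregation over the shared index $j$ in $N^F_j(i_1,i_2)=\big((j,i_2),(i_1,j)\big)$, encode the multiset of pairs via PMPs and Proposition \ref{prop:u}, use $m_3$ as a skip connection with concatenation as the injective encoding, replace the polynomial maps by MLPs via universal approximation over the finite set of colors, and finish with a histogram readout. Moreover, the "main obstacle" you anticipate at the end dissolves immediately, and in exactly the way you half-describe: any monomial in the concatenated $2a$ variables factors over the two blocks of coordinates, $(\tB_{j,i_2,:},\tB_{i_1,j,:})^{(\vbeta_l,\vgamma_l)}=\tB_{j,i_2,:}^{\vbeta_l}\,\tB_{i_1,j,:}^{\vgamma_l}$, so the complete family of PMPs of the pair multiset is obtained by allocating one feature channel per multi-index $(\vbeta_l,\vgamma_l)$ with $\abs{\vbeta_l}+\abs{\vgamma_l}\le n$, i.e. $b=\binom{n+2a}{2a}$ channels; no coupling between the two halves of each pair is lost, and this counting is precisely what the paper does.

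The one genuine flaw is your initialization. With vertex colors stored on the diagonal and adjacency in the last channel, the raw input color of an off-diagonal pair $(i_1,i_2)$ is $(0,\ldots,0,A_{i_1i_2})$; it contains neither $d(v_{i_1})$ nor $d(v_{i_2})$, so the raw coloring does \emph{not} refine the $2$-FWL isomorphism-type coloring, and the base case of your induction fails for colored graphs. The paper repairs this with an explicit first block that computes the isomorphism type: it appends an identity channel to the input and concatenates the products $\mA\cdot \tY_{:,:,j}$, $(\one\one^T-\mA)\cdot \tY_{:,:,j}$, $\tY_{:,:,j}\cdot \mA$, $\tY_{:,:,j}\cdot(\one\one^T-\mA)$, where $\tY$ holds the diagonal color channels, so that each entry $(i_1,i_2)$ records both endpoint colors, adjacency, and equality. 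One could alternatively argue that a single FWL-style update recovers the endpoint colors (the unique $j$ for which $(j,i_2)$ is diagonal is $j=i_2$, and similarly for $i_1$), but that argument needs the diagonal to be marked — again the identity channel — and a shifted induction; either way, your sketch needs this one-block repair before the inductive simulation of the update steps goes through.
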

The full proof is provided in the supplementary material. Here we outline the main idea of the proof. The second part of this theorem is already shown in Lemma \ref{lem:qmlp_invariance}. To prove the first part, namely that the model in \Eqref{e:qmlp} has $3$-WL expressiveness, we show it can implement the $2$-FWL algorithm, that is known to be equivalent to $3$-WL (see Section \ref{ss:WL}). As before, the challenge is in implementing the neighborhood multisets as used in the $2$-FWL algorithm. That is, given an input tensor $\tB\in\Real^{n^2\times a}$ we would like to compute an output tensor $\tC\in\Real^{n^2\times b}$ where $\tC_{i_1,i_2,:}\in\Real^b$ represents a color matching the multiset $\mmset{(\tB_{j,i_2,:},\tB_{i_1,j,:}) \ \vert \ j\in [n]}$.  
As before, we use the multiset representation introduced in section \ref{s:colors_and_multisets}. Consider the matrix $\mX\in\Real^{n \times 2a}$ defined by
\begin{equation}\label{e:mX}
 \mX_{j,:}=(\tB_{j,i_2,:},\tB_{i_1,j,:}), \quad j\in [n].
\end{equation}
Our goal is to compute an output tensor $\tW\in\Real^{n^2\times b}$, where $\tW_{i_1,i_2,:} = u(\mX)$. 

Consider the multi-index set $\set{\valpha \ \vert \ \valpha\in [n]^{2a},  \abs{\valpha}\leq n}$ of cardinality $b=\left(\begin{smallmatrix}n+2a\\ 2a\end{smallmatrix}\right)$, and write it in the form $\set{(\vbeta_l,\vgamma_l) \ \vert \ \vbeta,\vgamma\in [n]^a , \abs{\vbeta_l}+\abs{\vgamma_l} \leq n, l\in b}$. 

Now define polynomial maps $\tau_1,\tau_2 : \Real^a \too \Real^b$ by $\tau_1(x)=(x^{\vbeta_l} \ \vert \ l\in [b])$, and $\tau_2(x)=(x^{\vgamma_l}\ \vert \ l\in [b])$. We apply $\tau_1$ to the features of $\tB$, namely $\tY_{i_1,i_2,l}:=\tau_1(\tB)_{i_1,i_2,l}=(\tB_{i_1,i_2,:})^{\vbeta_l}$; similarly, $\tZ_{i_1,i_2,l}:=\tau_2(\tB)_{i_1,i_2,l}=(\tB_{i_1,i_2,:})^{\vgamma_l}$. Now,
$$\tW_{i_1,i_2,l}:=(\tZ_{:,:,l} \cdot \tY_{:,:,l})_{i_1,i_2} = \sum_{j=1}^n \tZ_{i_1,j,l} \tY_{j,i_2, l} = 
\sum_{j=1}^n \tB_{j,i_2,:}^{\vbeta_l} \tB_{i_1,j,:}^{\vgamma_l}=
\sum_{j=1}^n (\tB_{j,i_2,:},\tB_{i_1,j,:})^{(\vbeta_l,\vgamma_l)},$$
hence
$\tW_{i_1,i_2,:} = u(\mX),$ 
where $\mX$ is defined in \Eqref{e:mX}. To get an implementation with the  model in \Eqref{e:qmlp} we need to replace $\tau_1,\tau_2$ with MLPs. We use the universal approximation theorem to that end (details are in the supplementary material). 

To conclude, each update step of the $2$-FWL algorithm is implemented in the form of a block $B_i$ applying $m_1,m_2$ to the input tensor $\tB$, followed by matrix multiplication of matching features, $\tW = m_1(\tB)\cdot m_2(\tB)$. Since \Eqref{e:FWL_update} requires pairing the multiset with the input color of each $k$-tuple, we take $m_3$ to be identity and get $(\tB,\tW)$ as the block output. 

\paragraph{Generalization to $k$-FWL.} One possible extension is to add a generalized matrix multiplication to $k$-order networks to make them as expressive as $k$-FWL and hence $(k+1)$-WL. Generalized matrix multiplication is defined as follows. Given $\tA^1,\ldots,\tA^k\in\Real^{n^k}$, then $(\odot_{i=1}^k \tA^i)_{\vi} = \sum_{j=1}^n \tA^1_{j,i_2,\ldots,i_k}\tA^2_{i_1,j,\ldots,i_k}\cdots \tA^k_{i_1,\ldots,i_{k-1},j}$.

\revision{\paragraph{Relation to \citep{morris2018weisfeiler}.} Our model offers two benefits over the 1-2-3-GNN suggested in the work of \cite{morris2018weisfeiler}, a recently suggested GNN that also surpasses the expressiveness of message passing networks. First, it has lower space complexity (see details below). This allows us to work with a provably 3-WL expressive model while \cite{morris2018weisfeiler} resorted to a local 3-GNN version, hindering their 3-WL expressive power. Second, from a practical point of view our model is arguably simpler to implement as it only consists of fully connected layers and matrix multiplication (without having to account for all subsets of size 3). }

\revision{ \paragraph{Complexity analysis of a single block.}  Assuming a graph with $n$ nodes, dense edge data and a
constant feature depth, the layer proposed in \cite{morris2018weisfeiler} has $O(n^3)$ space complexity (number of subsets) and $O(n^4)$ time complexity ($O(n^3)$ subsets with $O(n)$ neighbors each). Our layer (block), however, has $O(n^2)$ space complexity as only second order tensors are stored (i.e., linear in the size of the graph data), and time complexity of $O(n^3)$ due to the matrix multiplication. We note that the time complexity of \cite{morris2018weisfeiler} can probably be improved to $O(n^3)$ while our time complexity can be improved to $O(n^{2.x})$ due to more advanced matrix multiplication algorithms. }

\section{Experiments}

\begin{table}[t]
	\centering
	\tiny
	\caption{Graph Classification Results on the datasets from \cite{Yanardag2015}}
	\resizebox{\columnwidth}{!}{
	
	\begin{tabular}{@{\hskip8pt}l@{\hskip8pt}r@{\hskip8pt}r@{\hskip8pt}r@{\hskip8pt}r@{\hskip8pt}r@{\hskip8pt}r@{\hskip8pt}r@{\hskip8pt}r@{\hskip8pt}}
		\toprule
		dataset & \multicolumn{1}{l}{MUTAG} & \multicolumn{1}{l}{PTC} & \multicolumn{1}{l}{PROTEINS} & \multicolumn{1}{l}{NCI1} & \multicolumn{1}{l}{NCI109} & \multicolumn{1}{l}{COLLAB} & \multicolumn{1}{l}{IMDB-B} & \multicolumn{1}{l}{IMDB-M} \\
		\midrule
		size  & 188   & 344   &  1113  & 4110  & 4127  & 5000  & 1000  & 1500 \\
		classes & 2     & 2     &  2     & 2     & 2     & 3     & 2     & 3 \\
		avg node \# & 17.9  & 25.5  &  39.1  & 29.8  & 29.6  & 74.4  & 19.7  & 13 \\
		\midrule
		\multicolumn{9}{c}{Results} \\
		\midrule
		GK \citep{shervashidze2009efficient}    & 81.39$\pm$1.7 & 55.65$\pm$0.5 &  71.39$\pm$0.3 & 62.49$\pm$0.3 & 62.35$\pm$0.3 & NA    & NA    & NA \\
		RW  \citep{vishwanathan2010graph}   & 79.17$\pm$2.1 & 55.91$\pm$0.3 &  59.57$\pm$0.1 & $>3$ days & NA    & NA    & NA    & NA \\
		PK \citep{neumann2016propagation}    & 76$\pm$2.7 & 59.5$\pm$2.4 &  73.68$\pm$0.7 & 82.54$\pm$0.5 & NA    & NA    & NA    & NA \\
		WL \citep{shervashidze2011weisfeiler}    & 84.11$\pm$1.9 & 57.97$\pm$2.5 &  \bf{74.68$\pm$0.5} & \bf{84.46$\pm$0.5} & \bf{85.12$\pm$0.3} & NA    & NA    & NA \\
		
		FGSD  \citep{verma2017hunt}  & \bf{92.12} & \bf{62.80} &  73.42 & 79.80 & 78.84 & \bf{80.02}  & 73.62    & \bf{52.41} \\
		
		AWE-DD \citep{ivanov2018anonymous}   & NA & NA &  NA & NA & NA & 73.93$\pm 1.9$    & \bf{74.45 $\pm$ 5.8}    & 51.54 $\pm3.6$ \\
		AWE-FB \citep{ivanov2018anonymous}    & 87.87$\pm$9.7&NA &  NA & NA & NA & 70.99 $\pm $ 1.4  & 73.13 $\pm$3.2    & 51.58 $\pm$ 4.6 \\
		
		\midrule
			DGCNN \citep{Zhang} & 85.83$\pm$1.7 & 58.59$\pm$2.5 &  75.54$\pm$0.9 & 74.44$\pm$0.5 & NA    & 73.76$\pm$0.5 & 70.03$\pm$0.9 & 47.83$\pm$0.9 \\
		PSCN \citep{Niepert2016}{\tiny (k=10)} & 88.95$\pm$4.4 & 62.29$\pm$5.7 & 75$\pm$2.5 & 76.34$\pm$1.7 & NA    & 72.6$\pm$2.2 & 71$\pm$2.3 & 45.23$\pm$2.8 \\
		DCNN \citep{Atwood2015}  & NA    & NA    &  61.29$\pm$1.6 & 56.61$\pm$ 1.0 & NA    & 52.11$\pm$0.7 & 49.06$\pm$1.4 & 33.49$\pm$1.4 \\
		ECC \citep{Simonovsky2017}  & 76.11 & NA    &  NA    & 76.82 & 75.03 & NA    & NA    & NA \\
		DGK \citep{Yanardag2015}  & 87.44$\pm$2.7 & 60.08$\pm$2.6 &  75.68$\pm$0.5 & 80.31$\pm$0.5 & 80.32$\pm$0.3 & 73.09$\pm$0.3 & 66.96$\pm$0.6 & 44.55$\pm$0.5 \\
		DiffPool \citep{Ying2018} & NA    & NA    &  \bf{78.1}  & NA    & NA    & 75.5  & NA    & NA \\
		CCN \citep{Kondor2018}  & \bf{91.64$\pm$7.2} & \bf{70.62$\pm$7.0} &  NA    & 76.27$\pm$4.1 & 75.54$\pm$3.4 & NA    & NA    & NA \\

	Invariant Graph Networks \citep{maron2018invariant}  & 83.89$\pm$12.95     & 58.53$\pm$6.86     &  76.58$\pm$5.49     & 74.33$\pm$2.71     & 72.82$\pm$1.45     & 78.36$\pm$2.47  & 72.0$\pm$5.54   & 48.73$\pm$3.41 \\
		GIN  \citep{xu2018how}& 89.4$\pm$5.6     & 64.6$\pm$7.0     &  76.2$\pm$2.8     & 82.7$\pm$1.7     & NA     & 80.2$\pm$1.9  & \bf{75.1$\pm$5.1} & \bf{52.3$\pm$2.8} \\
		1-2-3 GNN \citep{morris2018weisfeiler} & 86.1$\pm$    & 60.9$\pm$     &  75.5$\pm$     & 76.2$\pm$     & NA     & NA  & 74.2$\pm$ & 49.5$\pm$ \\
		Ours 1  & 90.55$\pm$8.7     & 66.17$\pm$6.54     &  77.2$\pm$4.73     &  \bf{83.19$\pm$1.11}  & \bf{81.84$\pm$1.85}     & 80.16$\pm$1.11  & 72.6$\pm$4.9 & 50$\pm$3.15 \\
		Ours 2  & 88.88$\pm$7.4  &  64.7$\pm$7.46  &  76.39$\pm$5.03  &  81.21$\pm$2.14  &  \bf{81.77$\pm$1.26}  &  \bf{81.38$\pm$1.42}  &  72.2$\pm$4.26  &  44.73$\pm$7.89  \\
		Ours 3  & 89.44$\pm$8.05  &  62.94$\pm$6.96  &  76.66$\pm$5.59  &  80.97$\pm$1.91  &  \bf{82.23$\pm$1.42}  &  \bf{80.68$\pm$1.71}  &  73$\pm$5.77  &  50.46$\pm$3.59   \\
		Rank & $\mathbf{3^{rd}}$ & $\mathbf{2^{nd}}$ & $\mathbf{2^{nd}}$ & $\mathbf{2^{nd}}$ & $\mathbf{2^{nd}}$ & $\mathbf{1^{st}}$ & $\mathbf{6^{th}}$ & $\mathbf{5^{th}}$ \\
		\bottomrule
	\end{tabular}%
	}
	\label{tab:class_res}%
	\vspace{-10pt}
\end{table}%
\paragraph{Implementation details.} We implemented the GNN model as described in Section \ref{s:3WL_network} (see \Eqref{e:qmlp}) using the TensorFlow framework \citep{abadi2016tensorflow}. We used three identical blocks $B_1,B_2,B_3$, where in each block $B_i:\Real^{n^2\times a} \too \Real^{n^2\times b}$ we took $m_3(x)=x$ to be the identity (\ie, $m_3$ acts as a skip connection, similar to its role in the proof of Theorem \ref{thm:3_WL}); $m_1,m_2:\Real^a \too \Real^b$ are chosen as $d$ layer MLP with hidden layers of $b$ features. After each block $B_i$ we also added a single layer MLP $m_4:\Real^{b+a}\too \Real^b$. Note that although this fourth MLP is not described in the model in Section \ref{s:3WL_network} it clearly does not decrease (nor increase) the theoretical expressiveness of the model; we found it efficient for coding as it reduces the parameters of the model. For the first block, $B_1$, $a=e+1$, where for the other blocks $b=a$. The MLPs are implemented with $1\times 1$ convolutions.

\begin{wraptable}[13]{R}{0.45\textwidth} \vspace*{-5pt}
  \caption{Regression, the QM9 dataset.}\label{tab:regression}
  \centering
   \resizebox{0.45\textwidth}{!}{ \begin{tabular}{lrrrrr}
    \toprule
    Target & \multicolumn{1}{l}{DTNN} & \multicolumn{1}{l}{MPNN} & \multicolumn{1}{l}{123-gnn} & \multicolumn{1}{l}{Ours 1} & \multicolumn{1}{l}{Ours 2} \\
    \midrule
    $\mu$    & 0.244 & 0.358 & 0.476 & \bf{0.231} & \bf{0.0934} \\
    $\alpha$ & 0.95  & 0.89  & \bf{0.27}  & 0.382 & 0.318 \\
    $\epsilon_{homo}$ & 0.00388 & 0.00541 & 0.00337 & \bf{0.00276} & \bf{0.00174} \\
    $\epsilon_{lumo} $& 0.00512 & 0.00623 & 0.00351 & \bf{0.00287} & \bf{0.0021} \\
    $\Delta_\epsilon $& 0.0112 & 0.0066 & 0.0048 & \bf{0.00406} & \bf{0.0029} \\
    $\langle R^2 \rangle$ & 17    & 28.5  & 22.9  & \bf{16.07} &  \bf{3.78} \\
    $ZPVE  $& 0.00172 & 0.00216 & \bf{0.00019} & 0.00064 & 0.000399 \\
    $U_0    $& -  & -  & 0.0427 & 0.234 & \bf{0.022} \\
    $U    $& -  & -     & 0.111 & 0.234 & \bf{0.0504} \\
    $H     $& -  & -  & 0.0419 & 0.229 & \bf{0.0294} \\
    $G     $& -  & -  & 0.0469 & 0.238 & \bf{0.024} \\
    $C_v  $& 0.27  & 0.42  & \bf{0.0944} & 0.184 & 0.144 \\
    \bottomrule
    \end{tabular}%
    }
  \label{tab:reg_res}%
\end{wraptable}
Parameter search was conducted on learning rate and learning rate decay, as detailed below.  
We have experimented with two network suffixes adopted from previous papers: (i) The suffix used in \cite{maron2018invariant} that consists of an invariant max pooling (diagonal and off-diagonal) followed by a three Fully Connected (FC) with hidden units' sizes of $(512,256, \# \text{classes})$;  (ii) the suffix used in \cite{xu2018how} adapted to our network: we apply the invariant max layer from \cite{maron2018invariant} to the output of every block followed by a single fully connected layer to $\# \text{classes}$. These outputs are then summed together and used as the network output on which the loss function is defined.

\paragraph{Datasets.} We evaluated our network on two different tasks: Graph classification and graph regression. For classification, we tested our method on eight real-world graph datasets from \citep{Yanardag2015}: three datasets consist of social network graphs, and the other five datasets come from bioinformatics and represent chemical compounds or protein structures. Each graph is represented by an adjacency matrix and possibly categorical node features (for the bioinformatics datasets). For the regression task, we conducted an experiment on a standard graph learning benchmark called the QM9 dataset \citep{ramakrishnan2014quantum, wu2018moleculenet}. It is composed of 134K small organic molecules (sizes vary from 4 to 29 atoms). Each molecule is represented by an adjacency matrix, a distance matrix (between atoms), categorical data on the edges, and node features; the data was obtained from the pytorch-geometric library \citep{fey2019fast}. The task is to predict 12 real valued physical quantities for each molecule.

\paragraph{Graph classification results.} 
We follow the standard 10-fold cross validation protocol and splits from \cite{Zhang} and report our results according to the protocol described in \cite{xu2018how}, namely the best averaged accuracy across the 10-folds. Parameter search was conducted on a fixed random 90$\%$-10$\%$ split: learning rate in $\set{5\cdot10^{-5},10^{-4},5\cdot10^{-4},10^{-3}}$; learning rate decay in $[0.5,1]$ every $20$ epochs.
We have tested three architectures: (1) $b=400$, $d=2$, and suffix (ii); (2) $b=400$, $d=2$, and suffix (i); and (3) $b=256$, $d=3$, and suffix (ii). (See above for definitions of $b,d$ and suffix). Table \ref{tab:class_res} presents a summary of the results (top part - non deep learning methods). The last row presents our ranking compared to all previous methods; note that we have scored in the top $3$ methods in $6$ out of $8$ datasets.

\pagebreak
\paragraph{Graph regression results.}  
The data is randomly split into 80\% train, 10\% validation and 10\% test. We have conducted the same parameter search as in the previous experiment on the validation set.  We have used the network (2) from classification experiment, \ie, $b=400$, $d=2$, and suffix (i), with an absolute error loss adapted to the regression task. Test results are according to the best validation error. We have tried two different settings: (1) training a single network to predict all the output quantities together and (2) training a different network for each quantity. Table \ref{tab:regression} compares the mean absolute error of our method with three other methods: 123-gnn \citep{morris2018weisfeiler} and \citep{wu2018moleculenet}; results of all previous work were taken from \citep{morris2018weisfeiler}. Note that our method achieves the lowest error on 5 out of the 12 quantities when using a single network, and the lowest error on 9 out of the 12 quantities in case each quantity is predicted by an independent network. 

\paragraph{Equivariant layer evaluation.} 

\begin{wrapfigure}[19]{r}{0.31\textwidth}\vspace{-15pt}
\includegraphics[scale=0.3]{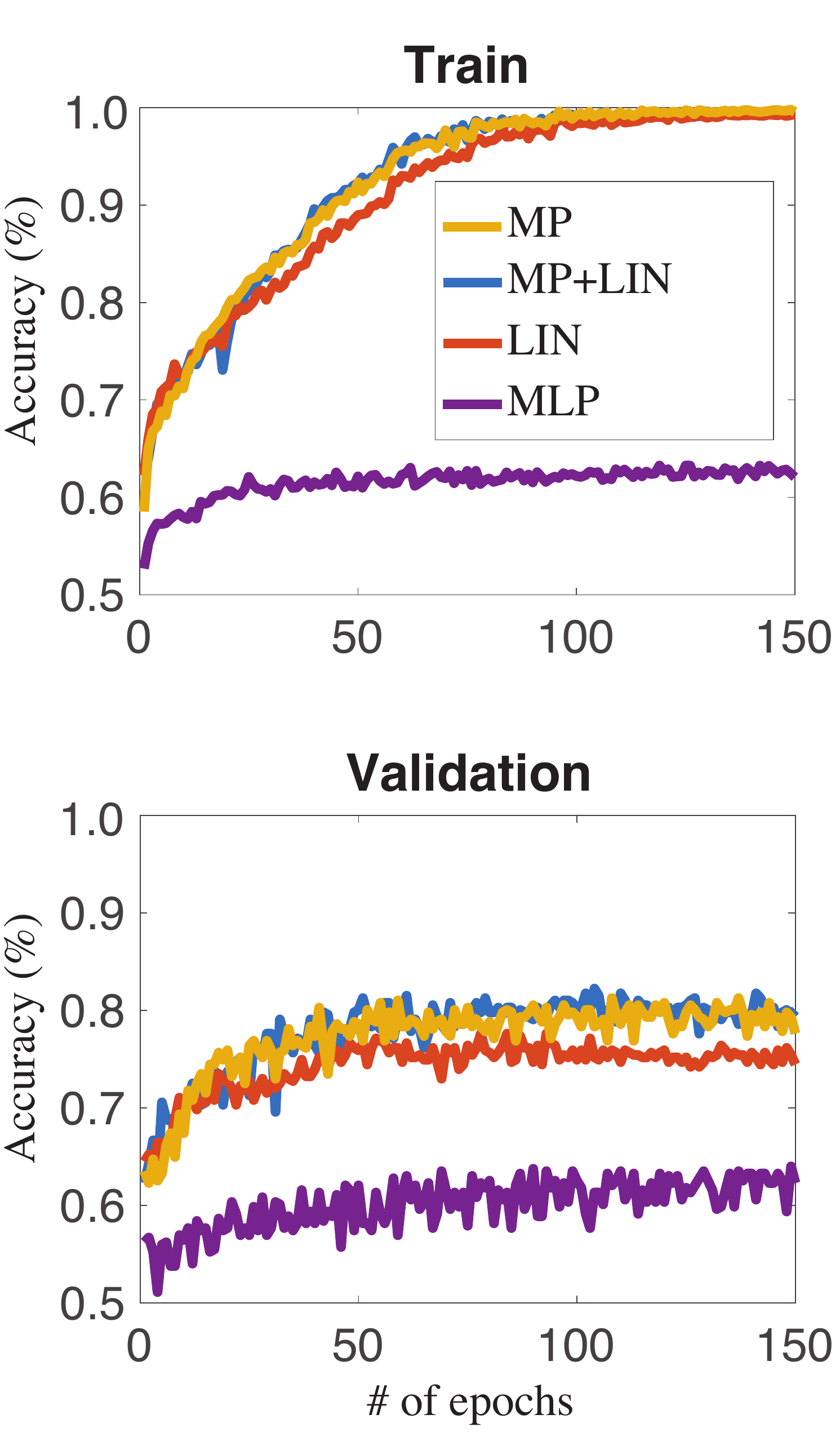}
\end{wrapfigure}
The model in Section \ref{s:3WL_network} does not incorporate all equivariant linear layers as characterized in \citep{maron2018invariant}. It is therefore of interest to compare this model to models richer in linear equivariant layers, as well as a simple MLP baseline (\ie, without matrix multiplication). We performed such an experiment on the NCI1 dataset \citep{Yanardag2015} comparing: (i) our suggested model, denoted Matrix Product (MP); (ii) matrix product + full linear basis from \citep{maron2018invariant} (MP+LIN); (iii) only full linear basis (LIN); and (iv) MLP applied to the feature dimension. 

Due to the memory limitation in \citep{maron2018invariant} we used the same feature depths of $b_1=32, b_2=64, b_3=256$, and $d=2$. The inset shows the performance of all methods on both training and validation sets, where we performed a parameter search on the learning rate (as above) for a fixed decay rate of $0.75$ every $20$ epochs. Although all methods (excluding MLP) are able to achieve a zero training error, the (MP) and (MP+LIN) enjoy better generalization than the linear basis of \cite{maron2018invariant}. Note that (MP) and (MP+LIN) are comparable, however (MP) is considerably more efficient. 

\section{Conclusions}
We explored two models for graph neural networks that possess superior graph distinction abilities compared to existing models. First, we proved that $k$-order invariant networks offer a hierarchy of neural networks that parallels the distinction power of the $k$-WL tests. This model has lesser practical interest due to the high dimensional tensors it uses. Second, we suggested a simple GNN model consisting of only MLPs augmented with matrix multiplication and proved it achieves $3$-WL expressiveness. 
This model operates on input tensors of size $n^2$ and therefore useful for problems with dense edge data. The downside is that its complexity is still quadratic, worse than message passing type methods. An interesting future work is to search for more efficient GNN models with high expressiveness. Another interesting research venue is quantifying the generalization ability of these models. 

\subsection*{Acknowledgments}
This research was supported in part by the European Research Council (ERC Consolidator Grant, "LiftMatch" 771136) and the Israel Science Foundation (Grant No. 1830/17).

\bibliography{graph.bib}

\begin{thebibliography}{}

\bibitem[Abadi et~al., 2016]{abadi2016tensorflow}
Abadi, M., Barham, P., Chen, J., Chen, Z., Davis, A., Dean, J., Devin, M.,
  Ghemawat, S., Irving, G., Isard, M., et~al. (2016).
\newblock Tensorflow: A system for large-scale machine learning.
\newblock In {\em 12th $\{$USENIX$\}$ Symposium on Operating Systems Design and
  Implementation ($\{$OSDI$\}$ 16)}, pages 265--283.

\bibitem[Atwood and Towsley, 2016]{Atwood2015}
Atwood, J. and Towsley, D. (2016).
\newblock Diffusion-convolutional neural networks.
\newblock In {\em Advances in Neural Information Processing Systems}, pages
  1993--2001.

\bibitem[Babai, 2016]{babai2016graph}
Babai, L. (2016).
\newblock Graph isomorphism in quasipolynomial time.
\newblock In {\em Proceedings of the forty-eighth annual ACM symposium on
  Theory of Computing}, pages 684--697. ACM.

\bibitem[Briand, 2004]{briand2004algebra}
Briand, E. (2004).
\newblock When is the algebra of multisymmetric polynomials generated by the
  elementary multisymmetric polynomials.
\newblock {\em Contributions to Algebra and Geometry}, 45(2):353--368.

\bibitem[Bruna et~al., 2013]{Bruna2013}
Bruna, J., Zaremba, W., Szlam, A., and LeCun, Y. (2013).
\newblock {Spectral Networks and Locally Connected Networks on Graphs}.
\newblock pages 1--14.

\bibitem[Cai et~al., 1992]{cai1992optimal}
Cai, J.-Y., F{\"u}rer, M., and Immerman, N. (1992).
\newblock An optimal lower bound on the number of variables for graph
  identification.
\newblock {\em Combinatorica}, 12(4):389--410.

\bibitem[Cybenko, 1989]{cybenko1989approximation}
Cybenko, G. (1989).
\newblock Approximation by superpositions of a sigmoidal function.
\newblock {\em Mathematics of control, signals and systems}, 2(4):303--314.

\bibitem[Defferrard et~al., 2016]{Defferrard2016}
Defferrard, M., Bresson, X., and Vandergheynst, P. (2016).
\newblock Convolutional neural networks on graphs with fast localized spectral
  filtering.
\newblock In {\em Advances in Neural Information Processing Systems}, pages
  3844--3852.

\bibitem[Douglas, 2011]{douglas2011weisfeiler}
Douglas, B.~L. (2011).
\newblock The weisfeiler-lehman method and graph isomorphism testing.
\newblock {\em arXiv preprint arXiv:1101.5211}.

\bibitem[Duvenaud et~al., 2015]{Duvenaud2015}
Duvenaud, D.~K., Maclaurin, D., Iparraguirre, J., Bombarell, R., Hirzel, T.,
  Aspuru-Guzik, A., and Adams, R.~P. (2015).
\newblock Convolutional networks on graphs for learning molecular fingerprints.
\newblock In {\em Advances in neural information processing systems}, pages
  2224--2232.

\bibitem[Fey and Lenssen, 2019]{fey2019fast}
Fey, M. and Lenssen, J.~E. (2019).
\newblock Fast graph representation learning with pytorch geometric.
\newblock {\em arXiv preprint arXiv:1903.02428}.

\bibitem[Gilmer et~al., 2017]{Gilmer2017}
Gilmer, J., Schoenholz, S.~S., Riley, P.~F., Vinyals, O., and Dahl, G.~E.
  (2017).
\newblock Neural message passing for quantum chemistry.
\newblock In {\em International Conference on Machine Learning}, pages
  1263--1272.

\bibitem[Gori et~al., 2005]{Gori2005}
Gori, M., Monfardini, G., and Scarselli, F. (2005).
\newblock {A new model for earning in raph domains}.
\newblock {\em Proceedings of the International Joint Conference on Neural
  Networks}, 2(January):729--734.

\bibitem[Grohe, 2017]{grohe2017descriptive}
Grohe, M. (2017).
\newblock {\em Descriptive complexity, canonisation, and definable graph
  structure theory}, volume~47.
\newblock Cambridge University Press.

\bibitem[Grohe and Otto, 2015]{grohe2015pebble}
Grohe, M. and Otto, M. (2015).
\newblock Pebble games and linear equations.
\newblock {\em The Journal of Symbolic Logic}, 80(3):797--844.

\bibitem[Hamilton et~al., 2017a]{Hamilton2017}
Hamilton, W., Ying, Z., and Leskovec, J. (2017a).
\newblock Inductive representation learning on large graphs.
\newblock In {\em Advances in Neural Information Processing Systems}, pages
  1024--1034.

\bibitem[Hamilton et~al., 2017b]{hamilton2017representation}
Hamilton, W.~L., Ying, R., and Leskovec, J. (2017b).
\newblock Representation learning on graphs: Methods and applications.
\newblock {\em arXiv preprint arXiv:1709.05584}.

\bibitem[Henaff et~al., 2015]{Henaff2015}
Henaff, M., Bruna, J., and LeCun, Y. (2015).
\newblock {Deep Convolutional Networks on Graph-Structured Data}.
\newblock (June).

\bibitem[Hornik, 1991]{hornik1991approximation}
Hornik, K. (1991).
\newblock Approximation capabilities of multilayer feedforward networks.
\newblock {\em Neural networks}, 4(2):251--257.

\bibitem[Ivanov and Burnaev, 2018]{ivanov2018anonymous}
Ivanov, S. and Burnaev, E. (2018).
\newblock Anonymous walk embeddings.
\newblock {\em arXiv preprint arXiv:1805.11921}.

\bibitem[Keriven and Peyr{\'{e}}, 2019]{keriven2019universal}
Keriven, N. and Peyr{\'{e}}, G. (2019).
\newblock Universal invariant and equivariant graph neural networks.
\newblock {\em CoRR}, abs/1905.04943.

\bibitem[Kipf and Welling, 2016]{kipf}
Kipf, T.~N. and Welling, M. (2016).
\newblock Semi-supervised classification with graph convolutional networks.
\newblock {\em arXiv preprint arXiv:1609.02907}.

\bibitem[Kondor et~al., 2018]{Kondor2018}
Kondor, R., Son, H.~T., Pan, H., Anderson, B., and Trivedi, S. (2018).
\newblock Covariant compositional networks for learning graphs.
\newblock {\em arXiv preprint arXiv:1801.02144}.

\bibitem[Kriege et~al., 2019]{kriege2019survey}
Kriege, N.~M., Johansson, F.~D., and Morris, C. (2019).
\newblock A survey on graph kernels.

\bibitem[Krizhevsky et~al., 2012]{krizhevsky2012imagenet}
Krizhevsky, A., Sutskever, I., and Hinton, G.~E. (2012).
\newblock Imagenet classification with deep convolutional neural networks.
\newblock In {\em Advances in neural information processing systems}, pages
  1097--1105.

\bibitem[Lei et~al., 2017]{lei2017deriving}
Lei, T., Jin, W., Barzilay, R., and Jaakkola, T. (2017).
\newblock Deriving neural architectures from sequence and graph kernels.
\newblock In {\em Proceedings of the 34th International Conference on Machine
  Learning-Volume 70}, pages 2024--2033. JMLR. org.

\bibitem[Levie et~al., 2017]{Levie2017}
Levie, R., Monti, F., Bresson, X., and Bronstein, M.~M. (2017).
\newblock {CayleyNets: Graph Convolutional Neural Networks with Complex
  Rational Spectral Filters}.
\newblock pages 1--12.

\bibitem[Li et~al., 2015]{Li2015}
Li, Y., Tarlow, D., Brockschmidt, M., and Zemel, R. (2015).
\newblock {Gated Graph Sequence Neural Networks}.
\newblock (1):1--20.

\bibitem[Maron et~al., 2019a]{maron2018invariant}
Maron, H., Ben-Hamu, H., Shamir, N., and Lipman, Y. (2019a).
\newblock Invariant and equivariant graph networks.
\newblock In {\em International Conference on Learning Representations}.

\bibitem[Maron et~al., 2019b]{maron2019universality}
Maron, H., Fetaya, E., Segol, N., and Lipman, Y. (2019b).
\newblock On the universality of invariant networks.
\newblock In {\em International conference on machine learning}.

\bibitem[Monti et~al., 2017]{monti2017geometric}
Monti, F., Boscaini, D., Masci, J., Rodola, E., Svoboda, J., and Bronstein,
  M.~M. (2017).
\newblock Geometric deep learning on graphs and manifolds using mixture model
  cnns.
\newblock In {\em Proc. CVPR}, volume~1, page~3.

\bibitem[Monti et~al., 2018]{Monti2018}
Monti, F., Shchur, O., Bojchevski, A., Litany, O., G{\"{u}}nnemann, S., and
  Bronstein, M.~M. (2018).
\newblock {Dual-Primal Graph Convolutional Networks}.
\newblock pages 1--11.

\bibitem[Morris et~al., 2017]{morris2017glocalized}
Morris, C., Kersting, K., and Mutzel, P. (2017).
\newblock Glocalized weisfeiler-lehman graph kernels: Global-local feature maps
  of graphs.
\newblock In {\em 2017 IEEE International Conference on Data Mining (ICDM)},
  pages 327--336. IEEE.

\bibitem[Morris and Mutzel, 2019]{morris2019towards}
Morris, C. and Mutzel, P. (2019).
\newblock Towards a practical $ k $-dimensional weisfeiler-leman algorithm.
\newblock {\em arXiv preprint arXiv:1904.01543}.

\bibitem[Morris et~al., 2018]{morris2018weisfeiler}
Morris, C., Ritzert, M., Fey, M., Hamilton, W.~L., Lenssen, J.~E., Rattan, G.,
  and Grohe, M. (2018).
\newblock Weisfeiler and leman go neural: Higher-order graph neural networks.
\newblock {\em arXiv preprint arXiv:1810.02244}.

\bibitem[Murphy et~al., 2019]{murphy2019relational}
Murphy, R.~L., Srinivasan, B., Rao, V., and Ribeiro, B. (2019).
\newblock Relational pooling for graph representations.
\newblock {\em arXiv preprint arXiv:1903.02541}.

\bibitem[Neumann et~al., 2016]{neumann2016propagation}
Neumann, M., Garnett, R., Bauckhage, C., and Kersting, K. (2016).
\newblock Propagation kernels: efficient graph kernels from propagated
  information.
\newblock {\em Machine Learning}, 102(2):209--245.

\bibitem[Niepert et~al., 2016]{Niepert2016}
Niepert, M., Ahmed, M., and Kutzkov, K. (2016).
\newblock {Learning Convolutional Neural Networks for Graphs}.

\bibitem[Ramakrishnan et~al., 2014]{ramakrishnan2014quantum}
Ramakrishnan, R., Dral, P.~O., Rupp, M., and Von~Lilienfeld, O.~A. (2014).
\newblock Quantum chemistry structures and properties of 134 kilo molecules.
\newblock {\em Scientific data}, 1:140022.

\bibitem[Rydh, 2007]{rydh2007minimal}
Rydh, D. (2007).
\newblock A minimal set of generators for the ring of multisymmetric functions.
\newblock In {\em Annales de l'institut Fourier}, volume~57, pages 1741--1769.

\bibitem[Scarselli et~al., 2009]{Scarselli2009}
Scarselli, F., Gori, M., Tsoi, A.~C., Hagenbuchner, M., and Monfardini, G.
  (2009).
\newblock {The graph neural network model}.
\newblock {\em Neural Networks, IEEE Transactions on}, 20(1):61--80.

\bibitem[Shervashidze et~al., 2011]{shervashidze2011weisfeiler}
Shervashidze, N., Schweitzer, P., Leeuwen, E. J.~v., Mehlhorn, K., and
  Borgwardt, K.~M. (2011).
\newblock Weisfeiler-lehman graph kernels.
\newblock {\em Journal of Machine Learning Research}, 12(Sep):2539--2561.

\bibitem[Shervashidze et~al., 2009]{shervashidze2009efficient}
Shervashidze, N., Vishwanathan, S., Petri, T., Mehlhorn, K., and Borgwardt, K.
  (2009).
\newblock Efficient graphlet kernels for large graph comparison.
\newblock In {\em Artificial Intelligence and Statistics}, pages 488--495.

\bibitem[Simonovsky and Komodakis, 2017]{Simonovsky2017}
Simonovsky, M. and Komodakis, N. (2017).
\newblock {Dynamic edge-conditioned filters in convolutional neural networks on
  graphs}.
\newblock In {\em Proceedings - 30th IEEE Conference on Computer Vision and
  Pattern Recognition, CVPR 2017}.

\bibitem[Veli{\v{c}}kovi{\'{c}} et~al., 2017]{Velickovic2017}
Veli{\v{c}}kovi{\'{c}}, P., Cucurull, G., Casanova, A., Romero, A., Li{\`{o}},
  P., and Bengio, Y. (2017).
\newblock {Graph Attention Networks}.
\newblock pages 1--12.

\bibitem[Verma and Zhang, 2017]{verma2017hunt}
Verma, S. and Zhang, Z.-L. (2017).
\newblock Hunt for the unique, stable, sparse and fast feature learning on
  graphs.
\newblock In {\em Advances in Neural Information Processing Systems}, pages
  88--98.

\bibitem[Vishwanathan et~al., 2010]{vishwanathan2010graph}
Vishwanathan, S. V.~N., Schraudolph, N.~N., Kondor, R., and Borgwardt, K.~M.
  (2010).
\newblock Graph kernels.
\newblock {\em Journal of Machine Learning Research}, 11(Apr):1201--1242.

\bibitem[Wu et~al., 2018]{wu2018moleculenet}
Wu, Z., Ramsundar, B., Feinberg, E.~N., Gomes, J., Geniesse, C., Pappu, A.~S.,
  Leswing, K., and Pande, V. (2018).
\newblock Moleculenet: a benchmark for molecular machine learning.
\newblock {\em Chemical science}, 9(2):513--530.

\bibitem[Xu et~al., 2019]{xu2018how}
Xu, K., Hu, W., Leskovec, J., and Jegelka, S. (2019).
\newblock How powerful are graph neural networks?
\newblock In {\em International Conference on Learning Representations}.

\bibitem[Yanardag and Vishwanathan, 2015]{Yanardag2015}
Yanardag, P. and Vishwanathan, S. (2015).
\newblock {Deep Graph Kernels}.
\newblock In {\em Proceedings of the 21th ACM SIGKDD International Conference
  on Knowledge Discovery and Data Mining - KDD '15}.

\bibitem[Ying et~al., 2018]{Ying2018}
Ying, R., You, J., Morris, C., Ren, X., ~, W.~L., and Leskovec, J. (2018).
\newblock {Hierarchical Graph Representation Learning with Differentiable
  Pooling}.

\bibitem[Zaheer et~al., 2017]{zaheer2017deep}
Zaheer, M., Kottur, S., Ravanbakhsh, S., Poczos, B., Salakhutdinov, R.~R., and
  Smola, A.~J. (2017).
\newblock Deep sets.
\newblock In {\em Advances in Neural Information Processing Systems}, pages
  3391--3401.

\bibitem[Zhang and Chen, 2017]{zhang2017weisfeiler}
Zhang, M. and Chen, Y. (2017).
\newblock Weisfeiler-lehman neural machine for link prediction.
\newblock In {\em Proceedings of the 23rd ACM SIGKDD International Conference
  on Knowledge Discovery and Data Mining}, pages 575--583. ACM.

\bibitem[Zhang et~al., 2018]{Zhang}
Zhang, M., Cui, Z., Neumann, M., and Chen, Y. (2018).
\newblock An end-to-end deep learning architecture for graph classification.
\newblock In {\em Proceedings of AAAI Conference on Artificial Inteligence}.

\end{thebibliography}
\bibliographystyle{apalike}
\appendix
\section{Proof of Proposition \ref{prop:u}} \label{app:prop1}
\begin{proof}
First, if $\mX'=g\cdot \mX$, then $p_\valpha(\mX)=p_\valpha(\mX')$ for all $\valpha$ and therefore $u(\mX)=u(\mX')$. In the other direction assume by way of contradiction that $u(\mX)=u(\mX')$ and $g\cdot \mX \ne \mX'$, for all $g\in S_n$. That is, $\mX$ and $\mX'$ represent different multisets. 
Let $[\mX]=\set{g\cdot \mX \ \vert \ g\in S_n}$ denote the orbit of $\mX$ under the action of $S_n$; similarly denote $[\mX']$. Let $K\subset \Real^{n\times a}$ be a compact set containing $[\mX],[\mX']$, where $[\mX]\cap [\mX'] = \emptyset$ by assumption. 

By the Stone–Weierstrass Theorem applied to the algebra of continuous functions $C(K,\Real)$ there exists a polynomial $f$ so that $f\vert_{[\mX]}\geq 1$ and $f\vert_{[\mX']}\leq 0$.  Consider the polynomial $$q(\mX)=\frac{1}{n!}\sum_{g\in S_n} f(g\cdot \mX).$$ 

By construction $q(g\cdot \mX)=q(\mX)$, for all $g\in S_n$. Therefore $q$ is a multi-symmetric polynomial. Therefore, $q(\mX)=r(u(\mX))$ for some polynomial $r$.
On the other hand, $$1\leq q(\mX) = r(u(\mX)) = r(u(\mX')) = q(\mX') \leq 0,$$ where we used the assumption that $u(\mX)=u(\mX')$. We arrive at a contradiction.
\end{proof}

\section{Proof of equivairance of WL update step}\label{app:thm_equi_wl}
Consider the formal tensor $\tB^j$ of dimension ${n^k}$ with multisets as entries:
\begin{equation}\label{e:Bj}
    \tB^j_\vi = \mmset{\tC^{l-1}_\vj \ \vert\  \vj\in N_j(\vi)}.
\end{equation} Then the $k$-WL update step (\Eqref{e:WL_update}) can be written as 
\begin{equation}\label{e:WL_update_simpler}
    \tC^l_\vi=\mathrm{enc}\parr{\tC^{l-1}_\vi, \tB^1_\vi, \tB^2_\vi,\ldots, \tB^k_\vi}.
\end{equation} To show equivariance, it is enough to show that each entry of the r.h.s.~tuple is equivariant. For its  first entry: $(g\cdot \tC^{l-1})_\vi=\tC^{l-1}_{g^{-1}(\vi)}$. For the other entries, consider w.l.o.g.~$\tB^j_\vi$: 
$$\mmset{(g\cdot\tC^{l-1})_{\vj} \ \vert \ \vj \in N_j(\vi) } = \mmset{\tC^{l-1}_{g^{-1}(\vj)} \ \vert \ \vj \in  N_j(\vi) } = \mmset{\tC^{l-1}_{\vj} \ \vert \ \vj \in N_j(g^{-1}(\vi)) }=\tB^j_{g^{-1}(\vi)}.$$ We get that feeding $k$-WL update rule with  $g\cdot \tC^{l-1}$ we get as output $\tC^l_{g^{-1}(\vi)}=(g\cdot \tC^l)_\vi$.

\section{Proof of Theorem \ref{thm:k_wl_k_order}}\label{app:thm1}

\begin{proof}

We will prove a slightly stronger claim: Assume we are given some finite set of graphs. For example, we can think of all combinatorial graphs (\ie, graphs represented by binary adjacency matrices) of $n$ vertices . Our task is to build a $k$-order network $F$ that assigns different output $F(G)\ne F(G')$ whenever $G,G'$ are non-isomorphic graphs distinguishable by the $k$-WL test. 

Our construction of $F$ has three main steps. First in Section \ref{sss:input_and_init} we implement the initialization step. Second, Section \ref{sss:update_step} we implement the coloring update rules of the $k$-WL. Lastly, we implement a histogram calculation providing different features to $k$-WL distinguishable graphs in the collection.

\subsection{Input and Initialization}\label{sss:input_and_init}

\paragraph{Input.}
The input to the network can be seen as a tensor of the form  $\tB \in \Real^{n^2\times (e+1)}$ encoding an input graph $G=(V,E,d)$, as follows. 
The last channel of $\tB$, namely $\tB_{:,:,e+1}$ ('$:$' stands for all possible values $[n]$) encodes the adjacency matrix of $G$ according to $E$. The first $e$ channels $\tB_{:,:,1:e}$ are zero outside the diagonal, and $\tB_{i,i,1:e}=d(v_i)\in\Real^e$ is the color of vertex $v_i\in V$. 
Our assumption of finite graph collection means the set $\Omega\subset \Real^{n^2\times (e+1)}$ of possible input tensors $\tB$ is finite as well. 
Next we describe the different parts of $k$-WL implementation with $k$-order network. For brevity, we will denote by $\tB\in\Real^{n^k\times a}$ the input to each part and by $\tC \in \Real^{n^k\times b}$ the output. 

\paragraph{Initialization.}
We start with implementing the initialization of $k$-WL, namely computing a coloring representing the isomorphism type of each $k$-tuple. 
Our first step is to define a linear equivariant operator that extracts the sub-tensor corresponding to each multi-index $\vi$: let $L:\Real^{n^2 \times (e+1)}\too \Real^{n^k \times k^2 \times (e+2)}$ be the linear operator defined by 
\begin{align*}
    &L(\tX)_{\vi,r,s,w} =\tX_{i_{r},i_{s},w}, \quad w\in [e+1] \\ 
    &L(\tX)_{\vi,r,s,e+2} = \begin{cases} 1 & i_r=i_s \\ 0 & \text{otherwise} \end{cases}
\end{align*} 
for $\vi\in [n]^k, r,s\in[k]$. 

$L$ is equivariant with respect to the permutation action. Indeed, for $w\in[e+1]$, 
%
%
$$(g\cdot L(\tX))_{\vi,r,s,w} = L(\tX)_{g^{-1}(\vi),r,s,w} = \tX_{g^{-1}(i_r),g^{-1}(i_s),w} = (g\cdot \tX)_{i_r,i_s,w}= L(g\cdot \tX)_{\vi,r,s,w}.$$ 
For $w=e+2$ we have 
$$(g\cdot L(\tX))_{\vi,r,s,w} = L(\tX)_{g^{-1}(\vi),r,s,w} =  \begin{cases} 1 & g^{-1}(i_r)=g^{-1}(i_s) \\ 0 & \text{otherwise} \end{cases} = 
\begin{cases} 1 & i_r=i_s \\ 0 & \text{otherwise} \end{cases} = L(g\cdot \tX)_{\vi,r,s,w}.$$
Since $L$ is linear and equivariant it can be represented as a single linear layer in a $k$-order network. Note that $L(\tB)_{\vi,:,:,1:(e+1)}$ contains the sub-tensor of $\tB$ defined by the $k$-tuple of vertices $(v_{i_1},\ldots,v_{i_k})$, and $L(\tB)_{\vi,:,:,e+2}$ represents the equality pattern of the $k$-tuple $\vi$, which is equivalent to the equality pattern of the $k$-tuple of vertices $(v_{i_1},\ldots,v_{i_k})$. Hence, $L(\tB)_{\vi,:,:,:}$ represents the isomorphism type of the $k$-tuple of vertices $(v_{i_1},\ldots,v_{i_k})$. The first layer of our construction is therefore $\tC=L(\tB)$. 


\subsection{$k$-WL update step} \label{sss:update_step}
We next implement \Eqref{e:WL_update}. We achieve that in 3 steps. As before let $\tB\in\Real^{n^k\times a}$ be the input tensor to the the current $k$-WL step.

First, apply the polynomial function $\tau:\Real^{a}\too \Real^b$, $b=\left(\begin{smallmatrix}n+a\\ a\end{smallmatrix}\right)$ entrywise to $\tB$, where $\tau$ is defined by $\tau(x)=(x^\valpha)_{\abs{\valpha}\leq n}$ (note that $b$ is the number of multi-indices $\valpha$ such that $\abs{\valpha}\leq n$). This gives $\tY\in\Real^{n^k\times b}$ where $\tY_{\vi,:}=\tau(\tB_{\vi,:})\in\Real^b$.

Second, apply the linear operator $$\tC^j_{\vi,r} := L_j(\tY)_{\vi,r} = \sum_{i'=1}^n \tY_{i_1,\cdots,i_{j-1},i',i_{j+1},\ldots,i_k,r}, \quad \vi\in [n]^k, r\in [b].$$  
$L_j$ is equivariant with respect to the permutation action. Indeed, 
$L_j(g\cdot \tY)_{\vi,r}=$ $$\sum_{i'=1}^n (g\cdot \tY)_{i_1,\cdots,i_{j-1},i',i_{j+1},\ldots,r}
=\sum_{i'=1}^n \tY_{g^{-1}(i_1)\cdots,g^{-1}(i_{j-1}),i',g^{-1}(i_{j+1}),\ldots,r}=L_j(\tY)_{g^{-1}(\vi),r} = (g\cdot L_j(\tY))_{\vi,r}.$$
Now, note that $$\tC^j_{\vi,:}=L_j(\tY)_{\vi,:}=\sum_{i'=1}^n \tau(\tB_{i_1,\cdots,i_{j-1},i',i_{j+1},\ldots,i_k,:})=\sum_{\vj\in N_j(\vi)} \tau(\tB_{\vj,:})=u(\mX),$$ where $\mX = \tB_{i_1,\ldots,i_{j-1},:,i_{j+1},\ldots,i_k,:}$ as desired. 

Third, the $k$-WL update step is the concatenation: $(\tB,\tC^1,\ldots,\tC^k)$.

To finish this part we need to replace the polynomial function $\tau$ with an MLP $m:\Real^a\too\Real^b$. Since there is a finite set of input tensors $\Omega$, there could be only a finite set $\Upsilon$ of colors in $\Real^{a}$ in the input tensors to every update step. Using MLP universality \citep{cybenko1989approximation,hornik1991approximation} , let $m$ be an MLP so that $\norm{\tau(x)-m(x)}<\epsilon$ for all possible colors $x\in\Upsilon$.  We choose $\epsilon$ sufficiently small so that for all possible $\mX = \parr{\tB_{\vj} \ \vert \ \vj\in N_j(\vi)}\in\Real^{n\times a} $, $\vi\in[n]^k,j\in [k]$, $v(\mX)=\sum_{i\in [n]}m(x_i)$ satisfies the same properties as $u(\mX)=\sum_{i\in[n]}\tau(x_i)$ (see Proposition \ref{prop:u}), namely $v(\mX)=v(\mX')$ iff $\exists g\in S_n$ so that $\mX'=g\cdot \mX$. Note that the 'if' direction is always true by the invariance of the sum operator to permutations of the summands. The 'only if' direction is true for sufficiently small $\epsilon$. Indeed, $\norm{v(\mX)-u(\mX)}\leq n \max_{i\in [n]}{\norm{m(x_i)-\tau(x_i)}} \leq n\epsilon$, since $x_i\in \Upsilon$. Since this error can be made arbitrary small, $u$ is injective and there is a finite set of possible $\mX$ then $v$ can be made injective by sufficiently small $\epsilon>0$. 

\subsection{Histogram computation}
So far we have shown we can construct a $k$-order equivariant network $H= L_d \circ \sigma \circ \cdots \circ \sigma \circ L_1$ implementing $d$ steps of the $k$-WL algorithm. We take $d$ sufficiently large to discriminate the graphs in our collection as much as $k$-WL is able to. Now, when feeding an input graph this equivariant network outputs $H(\tB)\in\Real^{n^k\times a}$ which matches a color $H(\tB)_{\vi,:}$ (\ie, vector in $\Real^a$) to each $k$-tuple $\vi\in [n]^k$. 

To produce the final network we need to calculate a feature vector per graph that represents the histogram of its $k$-tuples' colors $H(\tB)$. As before, since we have a finite set of graphs, the set of colors in $H(\tB)$ is finite; let $b$ denote this number of colors. Let $m:\Real^a\too \Real^b$ be an MLP mapping each color $x\in\Real^a$ to the one-hot vector in $\Real^b$ representing this color. Applying $m$ entrywise after $H$, namely $m(H(\tB))$, followed by the summing invariant operator $h:\Real^{n^k\times b}\too \Real^b$ defined by $h(\tY)_j= \sum_{\vi\in [n]^k} \tY_{\vi,j}$, $j\in [b]$ provides the desired histogram. Our final $k$-order invariant network is 
$$F=h \circ m \circ L_d \circ \sigma \circ \cdots \circ \sigma \circ L_1.$$

\end{proof}

\section{Proof of Theorem \ref{thm:3_WL}}\label{app:thm2}

\begin{proof}
The second claim is proved in Lemma \ref{lem:qmlp_invariance}. Next we construct a network as in \Eqref{e:qmlp} distinguishing a pair of graphs that are $3$-WL distinguishable. As before, we will construct the network distinguishing any finite set of graphs of size $n$. That is, we consider a finite set of input tensors $\Omega\subset \Real^{n^2\times (e+2)}$.  

\paragraph{Input.}
We assume our input tensors have the form $\tB\in\Real^{n^2\times(e+2)}$. The first $e+1$ channels are as before, namely encode vertex colors (features) and adjacency information. The $e+2$ channel is simply taken to be the identity matrix, that is $\tB_{:,:,e+2} = I_d$.

\paragraph{Initialization.}
First, we need to implement the $2$-FWL initialization (see Section \ref{ss:WL}). Namely, given an input tensor $\tB\in\Real^{n^2\times (e+1)}$ construct a tensor that colors $2$-tuples according to their isomorphism type. In this case the isomorphism type is defined by the colors of the two nodes and whether they are connected or not. Let $\mA:=\tB_{:,:,e+1}$ denote the adjacency matrix, and $\tY:=\tB_{:,:,1:e}$ the input vertex colors. Construct the tensor $\tC\in\Real^{n^2\times (4e+1)}$ defined by the concatenation of the following colors matrices into one tensor: $$\mA\cdot \tY_{:,:,j}, \quad (\one\one^T - \mA)\cdot \tY_{:,:,j}, \quad  \tY_{:,:,j}\cdot \mA, \quad \tY_{:,:,j}\cdot (\one\one^T - \mA), \quad  j\in[e],$$
and $\tB_{:,:,e+2}$.
Note that $\tC_{i_1,i_2,:}$ encodes the isomorphism type of the $2$-tuple sub-graph defined by $v_{i_1},v_{i_2}\in V$, since each entry of $\tC$ holds a concatenation of the node colors times the adjacency matrix of the graph ($\mA$) and the adjacency matrix of the complement graph ($\one \one^T-A$); the last channel also contains an indicator if $v_{i_1}=v_{i_2}$. Note that the transformation $\tB\mapsto \tC$ can be implemented with a single block $B_1$.

\paragraph{$2$-FWL update step.}
Next we implement a $2$-FWL update step, \Eqref{e:FWL_update}, which for $k=2$ takes the form $\tC_\vi = \mathrm{enc}\Big( \tB_\vi , \mset{ (\tB_{j,i_2},\tB_{i_1,j}) \ \Big \vert \ j\in [n] } \ \Big)$, $\vi=(i_1,i_2)$, and the input tensor $\tB\in\Real^{n^2\times a}$. To implement this we will need to compute a tensor $\tY$, where the coloring $\tY_\vi$ encodes the multiset $\mset{ (\tB_{j,i_2,:},\tB_{i_1,j,:}) \ \Big \vert \ j\in [n] }$. 

As done before, we use the multiset representation described in section \ref{s:colors_and_multisets}. Consider the matrix $\mX\in\Real^{n \times 2a}$ defined by
\begin{equation}\label{e:mX_a}
 \mX_{j,:}=(\tB_{j,i_2,:},\tB_{i_1,j,:}), \quad j\in [n].
\end{equation}
Our goal is to compute an output tensor $\tW\in\Real^{n^2\times b}$, where $\tW_{i_1,i_2,:} = u(\mX)$. 

Consider the multi-index set $\set{\valpha \ \vert \ \valpha\in [n]^{2a},  \abs{\valpha}\leq n}$ of cardinality $b=\left(\begin{smallmatrix}n+2a\\ 2a\end{smallmatrix}\right)$, and write it in the form $\set{(\vbeta_l,\vgamma_l) \ \vert \ \vbeta,\vgamma\in [n]^a , \abs{\vbeta_l}+\abs{\vgamma_l} \leq n, l\in b}$. Now define polynomial maps $\tau_1,\tau_2 : \Real^a \too \Real^b$ by $\tau_1(x)=(x^{\vbeta_l} \ \vert \ l\in [b])$, and $\tau_2(x)=(x^{\vgamma_l}\ \vert \ l\in [b])$. We apply $\tau_1$ to the features of $\tB$, namely $\tY_{i_1,i_2,l}:=\tau_1(\tB)_{i_1,i_2,l}=(\tB_{i_1,i_2,:})^{\vbeta_l}$; similarly, $\tZ_{i_1,i_2,l}:=\tau_2(\tB)_{i_1,i_2,l}=(\tB_{i_1,i_2,:})^{\vgamma_l}$. Now,
\begin{align*}
  \tW_{i_1,i_2,l} &:=(\tZ_{:,:,l} \cdot \tY_{:,:,l})_{i_1,i_2} = \sum_{j=1}^n \tZ_{i_1,j,l} \tY_{j,i_2, l}=\sum_{j=1}^n \tau_1(\tB)_{j,i_2,l}\  \tau_2(\tB)_{i_1,j,l}  \\&= 
\sum_{j=1}^n \tB_{j,i_2,:}^{\vbeta_l} \tB_{i_1,j,:}^{\vgamma_l}=
\sum_{j=1}^n (\tB_{j,i_2,:},\tB_{i_1,j,:})^{(\vbeta_l,\vgamma_l)},  
\end{align*}
hence
$\tW_{i_1,i_2,:} = u(\mX),$ 
where $\mX$ is defined in \Eqref{e:mX_a}. 

To implement this in the network we need to replace $\tau_i$ with MLPs $m_i$, $i=1,2$. That is, 
\begin{align}\label{e:m1_m2}
  \tW_{i_1,i_2,l} &:= \sum_{j=1}^n m_1(\tB)_{j,i_2,l}\  m_2(\tB)_{i_1,j,l} = v(\mX),
\end{align}
where $\mX\in\Real^{n\times 2a}$ is defined in \Eqref{e:mX_a}.

As before, since input tensors belong to a finite set $\Omega\subset \Real^{n^2\times(e+1)}$, so are all possible multisets $\mX$ and all colors, $\Upsilon$, produced by any part of the network. Similarly to the proof of Theorem \ref{thm:k_wl_k_order} we can take (using the universal approximation theorem) MLPs $m_1,m_2$ so that $\max_{x\in \Upsilon, i=1,2}\norm{\tau_i(x)-m_i(x)}<\epsilon$. We choose $\epsilon$ to be sufficiently small so that the map $v(\mX)$ defined in \Eqref{e:m1_m2}  maintains the injective property of $u$ (see Proposition \ref{prop:u}): It discriminates between $\mX,\mX'$ not representing the same multiset.

Lastly, note that taking $m_3$ to be the identity transformation and concatenating  $(\tB, m_1(\tB)\cdot m_2(\tB))$ concludes the implementation of the $2$-FWL update step. The computation of the color histogram can be done as in the proof of Theorem \ref{thm:k_wl_k_order}.

\end{proof}

\end{document}